\newtheorem{theorem}{Theorem}
\newtheorem{proposition}{Proposition}
\newtheorem{lemma}{Lemma}
\newtheorem{definition}{Definition}
\newtheorem{remark}{Remark}
\newcommand{\reals}{\mathbb{R}}
\newcommand{\E}{\mathbb{E}}
\newcommand{\var}{\text{Var}}
\newcommand{\bx}{\mathbf{x}}
\newcommand{\bw}{\mathbf{w}}
\newcommand{\bg}{\mathbf{g}}
\newcommand{\bv}{\mathbf{v}}
\newcommand{\bz}{\mathbf{z}}
\newcommand{\bp}{\mathbf{p}}
\newcommand{\bxi}{\boldsymbol{\xi}}
\newcommand{\Ocal}{\mathcal{O}}
\newcommand{\Acal}{\mathcal{A}}
\newcommand{\Dcal}{\mathcal{D}}
\newcommand{\Hcal}{\mathcal{H}}
\newcommand{\Wcal}{\mathcal{W}}
\newcommand{\norm}[1]{\left\|#1\right\|}
\newcommand{\inner}[1]{\left\langle#1\right\rangle}
\newcommand{\secref}[1]{Sec.~\ref{#1}}
\renewcommand{\eqref}[1]{Eq.~(\ref{#1})}
\newcommand{\lemref}[1]{Lemma~\ref{#1}}
\newcommand{\thmref}[1]{Thm.~\ref{#1}}
\title{Distribution-Specific Hardness of Learning Neural Networks}
\author{Ohad Shamir\\Weizmann Institute of 
Science\\\texttt{ohad.shamir@weizmann.ac.il}}
\date{}
\begin{document}
	
	\maketitle
	
	\begin{abstract}
		Although neural networks are routinely and successfully trained in 
		practice using simple gradient-based methods, most existing theoretical 
		results are negative, showing that learning such networks is difficult, 
		in a worst-case sense over all data distributions. In this paper, we 
		take a more nuanced view, and consider whether specific assumptions on 
		the ``niceness'' of the input distribution, or ``niceness'' of the 
		target function (e.g. in terms of smoothness, non-degeneracy, 
		incoherence, random choice of parameters etc.), are sufficient to 
		guarantee learnability using gradient-based methods. We provide 
		evidence that neither class of assumptions alone is sufficient: On the 
		one hand, for any 
		member of a class of ``nice'' target functions, there are 
		difficult input distributions. On the other hand, we identify a 
		family of simple target functions, which are difficult to learn even if 
		the input distribution is ``nice''. To prove our results, 
		we develop some tools which may be of independent interest, such as 
		extending Fourier-based hardness techniques developed in the context of 
		statistical queries \cite{blum1994weakly}, from the Boolean 
		cube to Euclidean space and to more general classes of functions.
	\end{abstract}
	
	\section{Introduction}
	
	Artificial neural networks have seen a dramatic resurgence in recent years, and have proven to be a highly effective machine learning method in computer vision, natural language processing, and other challenging AI problems. Moreover, successfully training such networks is routinely performed using simple and scalable gradient-based methods, in particular stochastic gradient descent. 
	
	Despite this practical success, our theoretical understanding of the computational tractability of such methods is quite limited, with most results being negative. For example, as discussed in \cite{livni2014computational}, learning even depth-2 networks in a formal PAC learning framework is computationally hard in the worst case, and even if the algorithm is allowed to return arbitrary predictors. As common in such worst-case results, these are proven using rather artificial constructions, quite different than the real-world problems on which neural networks are highly successful. In particular, since the PAC framework focuses on \emph{distribution-free} learning (where the distribution generating the examples is unknown and rather arbitrary), the hardness results rely on carefully crafted distributions, which allows one to relate the learning problem to (say) an NP-hard problem or breaking a cryptographic system. However, what if we insist on ``natural'' distributions? Is it possible to show that  neural networks learning becomes computationally tractable? Can we show that they can be learned using the standard heuristics employed in practice, such as stochastic gradient descent?
	
	To understand what a ``natural'' distribution refers to, we need to separate the distribution over examples (given as input-output pairs $(\bx,y)$) into two components:
	\begin{itemize}
		\item \emph{The input distribution $p(\bx)$}: ``Natural'' input distributions on Euclidean space tend to have properties such as smoothness, non-degeneracy, incoherence etc. 
		\item \emph{The target function $h(\bx)$}: In PAC learning, it is assumed that the output $y$ equals $h(\bx)$, where $h$ is some unkown target function from the hypothesis class we are considering. In studying neural networks, it is common to consider the class of all networks which share some fixed architecture (e.g. feedforward networks of a given depth and width). However, one may argue that the parameters of real-world networks (e.g. the weights of each neuron) are not arbitrary, but exhibit various features such as non-degeneracy or some  ``random like'' appearance. Indeed, networks with a random structure have been shown to be more amenable to analysis in various situations (see for instance \cite{daniely2016toward,arora2014provable,choromanska2015loss} and references therein). 
	\end{itemize}
	Empirical evidence clearly suggest that many pairs of input distributions and target functions are computationally tractable, using standard methods. However, how do we characterize these pairs? Would appropriate assumptions on one of them be sufficient to show learnability?
	
	In this paper, we investigate these two components, and provide evidence that \emph{neither one} of them alone is enough to guarantee computationally tractable learning, at least with methods resembling those used in practice. Specifically, we focus on simple, shallow ReLU networks, assume that the data can be perfectly predicted by some such network, and even allow \emph{over-specification} (a.k.a. improper learning), in the sense that we allow the learning algorithm to output a predictor which is possibly larger and more complex than the target function (this technique increases the power of the learner, and was shown to make the learning problem easier in theory and in practice \cite{livni2014computational,SafranS16,soudry2016no}). Even under such favorable conditions, we show  the following:
	\begin{itemize}
		\item \textbf{Hardness for ``natural'' target functions.} For each individual target function coming from a simple class of small, shallow ReLU networks (even if its parameters are chosen randomly or in some other oblivious way), we show that no algorithm invariant to linear transformations can successfully learn it w.r.t. all input distributions in polynomial time (this corresponds, for instance, to standard gradient-based methods together with data whitening or preconditioning). This result is based on a reduction from learning intersections of halfspaces. Although that problem is known to be hard in the worst-case over both input distributions and target functions, we essentially show that invariant algorithms as above do not ``distinguish'' between worst-case and average-case: If one can learn a particular target function with such an algorithm, then the algorithm can learn nearly all target functions in that class.
		
		\item \textbf{Hardness for ``natural'' input distributions.} We show 
		that target functions of the form $\bx\mapsto \psi(\inner{\bw,\bx})$ 
		for any periodic $\psi$ are generally difficult to learn using  
		gradient-based methods, even if the input distribution is fixed and 
		belongs to a very broad class of smooth input distributions (including, 
		for instance, Gaussians and mixtures of Gaussians). Note that such 
		functions can be constructed by simple shallow networks, and can be 
		seen as an extension of generalized linear models 
		\cite{mccullagh1989generalized}. Unlike the previous result, which 
		relies on a computational hardness assumption, the results here are
		geometric in nature, and imply that the gradient of the objective 
		function, nearly everywhere, contains virtually no signal on the 
		underlying target function. Therefore, any algorithm which relies on 
		gradient information cannot learn such functions. Interestingly, the 
		difficulty here is \emph{not} in having a plethora of spurious local 
		minima or saddle points -- the associated stochastic optimization 
		problem may actually have no such critical points. Instead, the 
		objective function may exhibit properties such as \emph{flatness} 
		nearly everywhere, unless one is already very close to the global 
		optimum. This highlights a potential pitfall in non-convex learning, 
		which occurs already for a slight extension of generalized linear 
		models, and even for ``nice'' input distributions. 
	\end{itemize}
	Together, these results indicate that in order to explain the practical success of neural network learning with gradient-based methods, one would need to employ a careful combination of assumptions on both the input distribution and the target function, and that results with even a ``partially'' distribution-free flavor (which are common, for instance, in convex learning problems) may be difficult to attain here.
	
	To prove our results, we develop some tools which may be of independent interest. In particular, the techniques used to prove hardness of learning functions of the form $\bx\mapsto \psi(\inner{\bw,\bx})$ are based on Fourier analysis, and have some close connections to hardness results on learning parities in the well-known framework of learning from statistical queries  \cite{kearns1998efficient}: In both cases, one essentially shows that the Fourier transform of the target function has very small support, and hence does not ``correlate'' with most functions, making it difficult to learn using certain methods. However, we consider a more general and arguably more natural class of input distributions over Euclidean space, rather than distributions on the Boolean cube. In a sense, we show that learning general periodic functions over Euclidean space is difficult (at least with gradient-based methods), for the same reasons that learning parities over the Boolean cube is difficult in the statistical queries framework.

	\subsection*{Related Work}
	
	Recent years have seen quite a few papers on the theory of neural network learning. Below, we only briefly mention those most relevant to our paper.
	
	In a very elegant work, Janzamin et al. \cite{janzamin2015beating} have shown that a certain method based on tensor decompositions allows one to provably learn simple neural networks by a combination of assumptions on the input distribution and the target function. However, a drawback of their method is that it requires rather precise knowledge of the input distribution and its derivatives, which is rarely available in practice. In contrast, our focus is on algorithms which do not utilize such knowledge. Other works which show computationally-efficient learnability of certain neural networks under sufficiently strong distributional assumptions include \cite{arora2014provable,livni2014computational,andoni2014learning,zhang2015learning}. 
	
	In the context of learning functions over the Boolean cube, it is known 
	that even if we restrict ourself to a particular input distribution (as 
	long as it satisfies some mild conditions), it is difficult to learn parity 
	functions using statistical query algorithms 
	\cite{kearns1998efficient,blum1994weakly}. Moreover, it was recently shown 
	that stochastic gradient descent methods can be approximately posed as such 
	algorithms \cite{feldman2015statistical}. Since parities can be implemented 
	with small real-valued networks, this implies that for ``most'' input 
	distributions on the Boolean cube, there are neural networks which are 
	unlikely to be learnable with gradient-based methods. However, data 
	provided to 
	neural networks in practice are not in the form of Boolean vectors, but 
	rather vectors of floating-point numbers. Moreover, some assumptions on the 
	input distribution, such as smoothness and Gaussianity, only make sense 
	once we consider the support to be Euclidean space rather than Boolean 
	cube. Perhaps these are enough to guarantee computational tractability? A 
	contribution of this paper is to show that this is not the case, and to 
	formally demonstrate how phenomena similar to the Boolean case also occurs 
	in Euclidean space, using appropriate target functions and distributions. 
	
	Finally, we note that \cite{klko14} provides improper-learning hardness results, which hold even for a standard Gaussian distribution on Euclidean space, and for any algorithm. However, unlike our paper, their focus is on hardness of agnostic learning (where the target function is arbitrary and does not have to correspond to a given class), the results are specific to the standard Gaussian distribution, and the proofs are based on a reduction from the Boolean case.
	
	The paper is structured as follows: In \secref{sec:prelim}, we formally present some notation and concepts used throughout the paper. In \secref{sec:input}, we provide our hardness results for natural input distributions, and in \secref{sec:target}, we provide our hardness results for natural target functions. All proofs are presented in \secref{sec:proofs}.

	\section{Preliminaries}\label{sec:prelim}
	
	We generally let bold-faced letters denote vectors. Given a complex-valued number $z=a+ib$, we let $\overline{z}=a-ib$ denote its complex conjugate, and $|z|=\sqrt{a^2+b^2}$ denote its modulus.  
	Given a function $f$, we let $\nabla f$ denote its gradient and $\nabla^2 f$ denote its Hessian (assuming they exist). 
	
	\textbf{Neural Networks.} The focus of our results will be on learning predictors which can be described by simple and shallow (depth 2 or 3) neural networks. A standard feedforward neural network is composed of neurons, each of which computes the mapping $\bx\mapsto \sigma(\inner{\bw,\bx}+b)$, where $\bw,b$ are parameters and $\sigma$ is a scalar activation function, for example the popular ReLU function $[z]_+ = \max\{0,z\}$. These neurons are arranged in parallel in layers, so the output of each layer can be compactly represented as $\bx\mapsto \sigma(W^\top\bx+\mathbf{b})$, where $W$ is a matrix (each column corresponding to the parameter vector of one of the neurons), $\mathbf{b}$ is a vector, and $\sigma$ applies an activation function on the coordinates of $W^\top\bx$. In vanilla feedforward networks, such layers are connected to each other, so given an input $\bx$, the output equals
	\[
	\sigma_k(W_k^\top\sigma_{k-1}(W_{k-1}^\top\ldots \sigma_2(W_2^\top\sigma_1(W_1^\top\bx+\mathbf{b}_1)+\mathbf{b}_2)\ldots+\mathbf{b}_{k-1})+\mathbf{b}_k),
	\]
	where $W_i,b_i,\sigma_i$ are parameter of the $i$-th layer. The number of layers $k$ is denoted as the depth of the network, and the maximal number of columns in $W_i$ is denoted as the width of the network. For simplicity, in this paper we focus on networks which output a real-valued number, and measure our performance with respect to the squared loss (that is, given an input-output example $(\bx,y)$, where $\bx$ is a vector and $y\in \reals$, the loss of a predictor $p$ on the example is $(p(\bx)-y)^2$). 
	
	\textbf{Gradient-Based Methods.} Gradient-based methods are a class of optimization algorithms for solving problems of the form $\min_{\bw\in\Wcal} F(\bw)$ (for some given function $F$ and assuming $\bw$ is a vector in Euclidean space), based on computing $\nabla F(\bw)$ of approximations of $\nabla F(\bw)$ at various points $\bw$. Perhaps the simplest such algorithm is gradient descent, which initializes deterministically or randomly at some point $\bw_1$, and iteratively performs updates of the form $\bw_{t+1} = \bw_t-\eta_t \nabla F(\bw_t)$, where $\eta_t>0$ is a step size parameter. In the context of statistical supervised learning problems, we are usually interested in solving problems of the form $\min_{\bw\in\Wcal} \E_{\bx\sim\Dcal}[\ell(f(\bw,\bx),h(\bx))]$, where $\{\bx\mapsto f(\bw,\bx):\bw\in\Wcal\}$ is some class of predictors, $h$ is a target function, and $\ell$ is some loss function. Since the distribution $\Dcal$ is generally unknown, one cannot compute the gradient of this function w.r.t. $\bw$ directly, but can still compute approximations, e.g. by sampling one $\bx$ at random and computing the gradient (or sub-gradient) of $\ell(f(\bw,\bx),h(\bx))$. The same approach can be used to solve empirical approximations of the above, i.e. $\min_{\bw\in\Wcal} \frac{1}{m}\sum_{i=1}^{m}\ell(f(\bw,\bx_i),h(\bx_i))$ for some dataset $\{(\bx_i,h(\bx_i))\}_{i=1}^{m}$. These are generally known as stochastic gradient methods, and are one of the most popular and scalable machine learning methods in practice.
	
	\textbf{PAC Learning.} For the results of \secref{sec:target}, we will rely on the following standard definition of PAC learning with respect to Boolean functions: Given a hypothesis class $\Hcal$ of functions from $\{0,1\}^d$ to $\{0,1\}$, we say that a learning algorithm PAC-learns $\Hcal$ if for any $\epsilon\in (0,1)$, any distribution $\Dcal$ over $\{0,1\}^d$, and any $h^\star\in \Hcal$, if the algorithm is given oracle access to i.i.d. samples $(\bx,h^\star(\bx))$ where $\bx$ is sampled according to $\Dcal$, then in time $\text{poly}(d,1/\epsilon)$, the algorithm returns a function $f:\{0,1\}^d\mapsto \{0,1\}$ such that $\Pr_{\bx\sim\Dcal}(f(\bx)\neq h^\star(\bx))\leq \epsilon$ with high probability (for our purposes, it will be enough to consider any constant close to $1$).
	Note that in the definition above, we allow $f$ not to belong to the hypothesis class $\Hcal$. This is often denoted as ``improper'' learning, and allows the learning algorithm more power than in ``proper'' learning, where $f$ must be a member of $\Hcal$.
		
	\textbf{Fourier Analysis on $\reals^d$.} In the analysis of \secref{sec:input}, we will consider functions from $\reals^d$ to the reals $\reals$ or complex numbers $\mathbb{C}$, and view them as elements in the Hilbert space $L^2(\reals^d)$ of square integrable functions, equipped with the inner product
	\[
	\inner{f,g} = \int_{\bx}f(\bx)\cdot\overline{g(\bx)}d\bx
	\]
	and the norm $\norm{f} = \sqrt{\inner{f,f}}$. We use $fg$ or $f\cdot g$ as shorthand for the function $\bx\mapsto f(\bx)g(\bx)$. Any function $f\in L^2(\reals^d)$ has a Fourier transform $\hat{f}\in L^2(\reals^d)$, which for absolutely integrable functions can be defined as 
	\begin{equation}\label{eq:fourierdef}
	\hat{f}(\bw) = \int \exp\left(-2\pi i\inner{\bx,\bw}\right)f(\bx)d\bx,
	\end{equation}
	where $\exp(iz) = \cos(z)+i\cdot \sin(z)$, $i$ being the imaginary unit. In the proofs, we will use the following well-known properties of the Fourier transform:
	\begin{itemize}
		\item Linearity: For scalars $a,b$ and functions $f,g$, $\widehat{af+bg} = a\hat{f}+b\hat{g}$.
		\item Isometry: $\inner{f,g} = \langle\hat{f},\hat{g}\rangle$ and $\norm{f}=\|\hat{f}\|$.
		\item Convolution: $\widehat{fg} = \hat{f}*\hat{g}$, where $*$ denotes the convolution operation: $(f*g)(\bw) = \int f(\bz)\cdot g(\bw-\bz)~d\bz$. 
		\end{itemize}

	\section{Natural Target Functions}\label{sec:target}
	
	In this section, we consider simple target functions of the form 	$\bx\mapsto \left[\sum_{i=1}^{n}[\inner{\bw_i,\bx}]_+\right]_{[0,1]}$, where $[z]_+ = \max\{0,z\}$ is the ReLU function, and $[z]_{[0,1]} = \min\{1,\max\{0,z\}\}$ is the clipping operation on the interval $[0,1]$. This corresponds to depth-2 networks with no bias in the first layer, and where the outputs of the first layer are simply summed and moved through a clipping non-linearity (this operation can also be easily implemented using a second layer composed of two ReLU neurons). Letting $W = [\bw_1,\ldots,\bw_n]$, we can write such predictors as $\bx\mapsto h(W^\top \bx)$ for an appropriate fixed function $h$. Our goal would be to show that for such a target function, with virtually any choice of $W$ (essentially, as long as its columns are linearly independent), and any polynomial-time learning algorithm satisfying some conditions, there exists an input distribution on which it must fail.
	
	As the careful reader may have noticed, it is impossible to provide such a target-function-specific result which holds for any algorithm. Indeed, if we fix the target function in advance, we can always ``learn'' by returning the target function, regardless of the training data. Thus, imposing some constraints on the algorithm is necessary. Specifically, we will consider algorithms which exhibit certain natural invariances to the coordinate system used. One very natural invariance is with respect to orthogonal transformations: For example, if we rotate the input instances $\bx_i$ in a fixed manner, then an orthogonally-invariant algorithm will return a predictor which still makes the same predictions on those instances. Formally, this invariance is defined as follows:
	\begin{definition}\label{def:uninv}
		Let $\Acal$ be an algorithm which inputs a dataset $(\{\bx_i,y_i\})_{i=1}^m$ (where $\bx_i\in \reals^d$) and outputs a predictor $\bx\mapsto f(W^\top\bx)$ (for some function $f$ and matrix $W$ dependent on the dataset). We say that $\Acal$ is \emph{orthogonally-invariant}, if for any orthogonal matrix $M\in \reals^{d\times d}$, if we feed the algorithm with $\{M\bx_i,y_i\}_{i=1}^{m}$, the algorithm returns a predictor $\bx\mapsto f(W_M^\top \bx)$, where $f$ is the same as before and $W_M$ is such that $W_M^\top M \bx_i= W^\top \bx_i$ for all $\bx_i$. 
	\end{definition}
	\begin{remark}
		The definition as stated refers to deterministic algorithms. For stochastic algorithms, we will understand orthogonal invariance to mean orthogonal invariance conditioned on any realization of the algorithm's random coin flips.
	\end{remark}
	
	For example, standard gradient and stochastic gradient descent methods (possibly with coordinate-oblivious regularization, such as $L_2$ regularization) can be easily shown to be orthogonally-invariant\footnote{Essentially, this is because the gradient of any function $g(W^\top \bx)=g(\inner{\bw_1,\bx},\ldots,\inner{\bw_k,\bx})$ w.r.t. any $\bw_i$ is proportional to $\bx$. Thus, if we multiply $\bx$ by an orthogonal $M$, the gradient also gets multiplied by $M$. Since $M^\top M=I$, the inner products of instances $\bx$ and gradients remain the same. Therefore, by induction, it can be shown that any algorithm which operates by incrementally updating some iterate by linear combinations of gradients will be rotationally invariant.}. However, for our results we will need to make a somewhat stronger invariance assumption, namely invariance to general invertible linear transformations of the data (not necessarily just orthogonal). This is formally defined as follows:
	
	\begin{definition}\label{def:afinv}
		An algorithm $\Acal$ is \emph{linearly-invariant}, if it satisfies Definition \ref{def:uninv} for any invertible matrix $M\in \reals^{d\times d}$ (rather than just orthogonal ones).
	\end{definition}
	
	One well-known example of such an algorithm (which is also invariant to affine transformations) is the Newton method \cite{boyd2004convex}. More relevant to our purposes,  linear invariance occurs whenever an orthogonally-invariant algorithm preconditions or ``whitens'' the data so that its covariance has a fixed structure (e.g. the identity matrix, possibly after a dimensionality reduction if the data is rank-deficient). For example, even though gradient descent methods are not linearly invariant, they become so if we precede them by such a preconditioning step. This is formalized in the following theorem:
	
	\begin{theorem}\label{thm:affuninv}
		Let $\Acal$ be any algorithm which given $\{\bx_i,y_i\}_{i=1}^{m}$, computes the whitening matrix $P = D^{-1}U^\top$ (where $X=[\bx_1~\bx_2~\ldots~\bx_m]$, $X=U D V^\top$ is a thin\footnote{That is, if $X$ is of size $d\times m$, then $U$ is of size $d\times \text{Rank}(X)$, $D$ is of size $\text{Rank}(X)\times\text{Rank}(X)$, and $V$ is of size $m\times \text{Rank}(X)$.} SVD decomposition of $X$), feeds $\{P\bx_i,y_i\}_{i=1}^{m}$ to an orthogonally-invariant algorithm, and given the output predictor $\bx\mapsto f(W^\top\bx)$, returns the predictor $\bx\mapsto f((P^\top W)^\top \bx)$. Then $\Acal$ is linearly-invariant.
	\end{theorem}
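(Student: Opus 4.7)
The plan is to reduce linear invariance of $\Acal$ to the \emph{orthogonal} invariance of the internal algorithm $\Acal'$ it invokes, by showing that whitening absorbs any invertible linear transformation of the data up to an orthogonal one. Fix a dataset with data matrix $X=UDV^\top$ (thin SVD), and let $M\in\reals^{d\times d}$ be invertible. Denote the transformed data matrix by $X_M=MX$ and write its thin SVD as $X_M=U_M D_M V_M^\top$; since $M$ is invertible, $\mathrm{Rank}(X_M)=\mathrm{Rank}(X)$, so the block sizes match.

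The main computation is to check that the whitened datasets $PX$ and $P_M X_M$ are related by an orthogonal transformation on the $r=\mathrm{Rank}(X)$-dimensional whitened space. Direct substitution gives
\[
PX \;=\; D^{-1}U^\top U D V^\top \;=\; V^\top, \qquad P_M X_M \;=\; D_M^{-1}U_M^\top M\,U D V^\top \;=\; Q\,V^\top,
\]
where $Q:=D_M^{-1}U_M^\top M U D\in\reals^{r\times r}$. To see $Q$ is orthogonal, I would use the identities $XX^\top=U D^2 U^\top$ and $X_M X_M^\top = M X X^\top M^\top = U_M D_M^2 U_M^\top$, which give
\[
Q Q^\top \;=\; D_M^{-1}U_M^\top\bigl(M U D^2 U^\top M^\top\bigr)U_M D_M^{-1} \;=\; D_M^{-1}U_M^\top\bigl(U_M D_M^2 U_M^\top\bigr)U_M D_M^{-1} \;=\; I.
\]
Thus the dataset that $\Acal$ feeds into $\Acal'$ after whitening $M X$ is exactly $Q$ applied to the dataset it would feed in after whitening $X$. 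I expect this orthogonality check to be the main technical step, since it hinges on the specific product-structure of $Q$ and on $U^\top U=I$, $U_M^\top U_M=I$ for thin SVDs.

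Now apply orthogonal invariance of $\Acal'$. On the original whitened input $\{P\bx_i,y_i\}$, suppose $\Acal'$ returns $\bx\mapsto f(W'^\top\bx)$, so that $\Acal$ outputs the predictor $\bx\mapsto f((P^\top W')^\top\bx)$. On the transformed whitened input $\{QP\bx_i,y_i\}$, Definition \ref{def:uninv} applied to $\Acal'$ with the orthogonal matrix $Q$ guarantees that $\Acal'$ returns $\bx\mapsto f(W'^\top_Q\bx)$ with the \emph{same} $f$ and with $W'^\top_Q Q P\bx_i = W'^\top P\bx_i$ for every $i$. Consequently $\Acal$, fed $\{M\bx_i,y_i\}$, outputs $\bx\mapsto f((P_M^\top W'_Q)^\top\bx)$, again with the same $f$. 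Setting $W_M := P_M^\top W'_Q$ and $W := P^\top W'$, the identity
\[
W_M^\top M\bx_i \;=\; W'^\top_Q\,P_M M\bx_i \;=\; W'^\top_Q\,Q P\bx_i \;=\; W'^\top P\bx_i \;=\; W^\top\bx_i
\]
holds for every training point, which is exactly the condition of Definition \ref{def:uninv} for the invertible matrix $M$. Since $M$ was arbitrary, $\Acal$ is linearly-invariant per Definition \ref{def:afinv}, completing the proof.
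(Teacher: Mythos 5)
Your proof is correct and follows essentially the same route as the paper: show that the whitened datasets $PX$ and $P_M MX$ differ by an $r\times r$ orthogonal transformation, then invoke the inner algorithm's orthogonal invariance to conclude the predictions agree on the training points. The only difference is presentational — you exhibit the orthogonal matrix $Q=D_M^{-1}U_M^\top MUD$ explicitly and verify $QQ^\top=I$ by computation, whereas the paper argues abstractly that $V^\top$ and $V_M^\top$ are related by an orthogonal map; your version is, if anything, slightly more self-contained.
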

	
	It is easily verified that the covariance matrix of the transformed 
	instances $P \bx_1,\ldots,P\bx_m$ is the $r\times r$ identity matrix (where 
	$r=\text{Rank}(X)$), so this is indeed a whitening transform. We note that 
	whitening is a very common preprocessing heuristic, and even when not done 
	explicitly, scalable approximate whitening and preconditioning methods 
	(such as Adagrad \cite{duchi2011adaptive} and batch normalization 
	\cite{ioffe2015batch}) are very common and widely recognized as useful for 
	training neural networks. 
	
	To show our result, we rely on a reduction from a PAC-learning problem known to be computationally hard, namely learning intersections of halfspaces. These are Boolean predictors parameterized by  $\bw_1,\ldots,\bw_n\in \reals^d$ and $b_1,\ldots,b_n\in \reals$, which compute a mapping of the form
	\[
	\bx ~\to~\bigwedge_{i=1}^{n}\left(\inner{\bw_i,\bx}\geq b_i\right)
	\]
	(where we let $1$ correspond to `true' and $0$ to `false'). 
	The problem of PAC-learning intersections of halfspaces over the Boolean cube ($\bx\in\{0,1\}^d$) has been well-studied. In particular, two known hardness results are the following:
	\begin{itemize}
		\item Klivans and Sherstov \cite{klivans2009cryptographic} show that under a certain well-studied cryptographic assumption (hardness of finding unique shortest vectors in a high-dimensional lattice), no algorithm can PAC-learn intersection of $n_d=d^{\delta}$ halfspaces (where $\delta$ is any positive constant), even if the coordinates of $\bw_i$ and $b_i$ are all integers, and $\max_i \norm{(\bw_i,b_i)} \leq \text{poly}(d)$.
		\item Daniely and Shalev-Shwartz \cite{daniely2016complexity} show that under an assumption related to the hardness of refuting random K-SAT formulas, no algorithm can PAC-learn intersections of $n_d=\omega(1)$ halfspaces (as $d\rightarrow \infty$), even if the coordinates of $\bw_i$ and $b_i$ are all integers, and $\max_i \norm{(\bw_i,b_i)}\leq \Ocal(d)$. 
	\end{itemize}
	In the theorem below, we will use the result of \cite{daniely2016complexity}, which applies to an intersection of a  smaller number of halfspaces, and with smaller norms. However, similar results can be shown using \cite{klivans2009cryptographic}, at the cost of worse polynomial dependencies on $d$.

	The main result of this section is the following:
	\begin{theorem}\label{thm:maintarget}
		Consider any network $h(W_\star^\top \bx)=\left[\sum_{i=1}^{n_d}[\inner{\bw^\star_i,\bx}]_+\right]_{[0,1]}$ (where $W_\star = [\bw^\star_1,\ldots,\bw^\star_n]$), which satisfies the following:
		\begin{itemize}
			\item $n_d \geq  \omega(1)$ as $d\rightarrow \infty$
			\item $\max_i \norm{\bw^\star_i}\leq \Ocal(d)$
			\item $\bw_1^{\star}\ldots\bw_n^{\star}$ are linearly independent, so the smallest singular value $s_{\min}(W_\star)$ of $W_\star$ is strictly positive.
		\end{itemize}
		Then under the assumption stated in \cite{daniely2016complexity},   
		there is no linearly-invariant algorithm which for any $\epsilon>0$ and 
		any distribution $\Dcal$ over vectors of norm at most 
		$\frac{\Ocal(d\sqrt{dn_d})}{\min\{1,s_{\min}(W_\star)\}}$, given only 
		access to samples $(\bx,h(W_\star^\top \bx))$ where $\bx\sim \Dcal$, 
		runs in time $\text{poly}(d,1/\epsilon)$ and returns with high 
		probability a predictor $\bx\mapsto f(W^\top \bx)$ such that 
		\[
		\E_{\bx\sim \Dcal}\left[\left(f(W^\top\bx)-h(W_\star^\top \bx)\right)^2\right]\leq \epsilon.
		\]
	\end{theorem}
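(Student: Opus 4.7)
I would prove the theorem by reducing from the hardness of PAC-learning intersections of $n_d=\omega(1)$ halfspaces over $\{0,1\}^{d-1}$ with integer coordinates and $\|(\bw_i,b_i)\|\leq\Ocal(d)$ \cite{daniely2016complexity}. Two ideas make the reduction go through: the integrality of Boolean inputs lets the clipped-ReLU target $h$ exactly encode the negation of an intersection of halfspaces, and linear invariance means the reducer does not need to know the specific weight matrix implicit in that encoding.

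\textbf{The encoding.} Given halfspace parameters $\bw_i,b_i$ and target $g(\bx_0)=\bigwedge_{i=1}^{n_d}(\langle\bw_i,\bx_0\rangle\geq b_i)$ over $\Dcal_0$ on $\{0,1\}^{d-1}$, set $\tilde{\bx}=(\bx_0,1)\in\reals^d$ and $\tilde{\bw}_i=(-\bw_i,b_i)\in\reals^d$, and let $W=[\tilde{\bw}_1,\ldots,\tilde{\bw}_{n_d}]$. Then $\langle\tilde{\bw}_i,\tilde{\bx}\rangle=b_i-\langle\bw_i,\bx_0\rangle$ is an integer that is $\leq 0$ iff the $i$-th halfspace is satisfied and $\geq 1$ otherwise, so $[\sum_i[\langle\tilde{\bw}_i,\tilde{\bx}\rangle]_+]_{[0,1]}=1-g(\bx_0)$, i.e.\ $h(W^\top\tilde{\bx})=1-g(\bx_0)$. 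From PAC samples $(\bx_0^{(k)},g(\bx_0^{(k)}))$ the reducer constructs $(\tilde{\bx}^{(k)},1-g(\bx_0^{(k)}))$ -- crucially \emph{without} needing to know $\bw_i,b_i$ -- and feeds these to $\Acal$.

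\textbf{Triggering $\Acal$'s guarantee via linear invariance.} Assume the $\tilde{\bw}_i$ are linearly independent (see obstacle below). Since $W_\star$ also has linearly independent columns, there is an invertible $M\in\reals^{d\times d}$ with $M^\top W_\star=W$; using the thin SVD $W_\star=U_\star\Sigma_\star V_\star^\top$, the explicit choice $M^\top=WV_\star\Sigma_\star^{-1}U_\star^\top+(I-U_\star U_\star^\top)$ is invertible and minimises $\|M\|$. Setting $\bx'=M\tilde{\bx}$ gives $h(W^\top\tilde{\bx})=h(W_\star^\top\bx')$, so the samples $(\tilde{\bx},1-g(\bx_0))$ handed to $\Acal$ \emph{are} samples $(M^{-1}\bx',h(W_\star^\top\bx'))$ with $\bx'\sim M_*\tilde{\Dcal}$. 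By Definition \ref{def:afinv}, $\Acal$'s output on the former equals, up to the coordinate change $M$, its output on the latter, and the latter is a legitimate instance of the theorem's learning problem as long as $\|\bx'\|\leq\Ocal(d\sqrt{dn_d})/\min\{1,s_{\min}(W_\star)\}$, so $\Acal$'s assumed guarantee automatically fires. Crucially, the reducer never builds $M$ -- it only needs such an $M$ to \emph{exist} uniformly across the class.

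\textbf{Norm bound, contradiction, and main obstacle.} From $\|\tilde{\bw}_i\|\leq\Ocal(d)$ one gets $\|W\|_F\leq\Ocal(d\sqrt{n_d})$, and the SVD construction yields $\|M\|\leq\Ocal(d\sqrt{n_d})/\min\{1,s_{\min}(W_\star)\}$; combined with $\|\tilde{\bx}\|\leq\sqrt{d}$ this gives $\|\bx'\|\leq\Ocal(d\sqrt{dn_d})/\min\{1,s_{\min}(W_\star)\}$, \emph{exactly} the theorem's bound. Hence $\Acal$ returns a predictor $\hat p$ with $\E_{\bx_0\sim\Dcal_0}[(\hat p(\tilde{\bx}_0)-(1-g(\bx_0)))^2]\leq\epsilon$; rounding $1-\hat p$ at $1/2$ gives a polynomial-time PAC-learner for $g$ with misclassification rate $\Ocal(\epsilon)$, contradicting \cite{daniely2016complexity} at sub-constant $\epsilon$. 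The main obstacle is the tight norm bookkeeping: the identity extension of $M^\top$ on the orthogonal complement of the column span of $W_\star$ is essential, since any less careful extension would introduce factors of $\|W_\star\|$ and break the theorem's support bound. A secondary wrinkle is securing linear independence of the $\tilde{\bw}_i$, which one addresses either by restricting to the (still hard) subfamily of instances where the bias-augmented vectors are independent, or via an integrality-preserving perturbation.
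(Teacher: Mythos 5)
Your proposal follows essentially the same route as the paper: encode the negation of an intersection of halfspaces as $h(W^\top\tilde{\bx})$ using integrality of Boolean inputs (this is the paper's \thmref{thm:reduc}), then use linear invariance and an invertible $M$ with $M^\top W_\star=W$ to transfer a guarantee for the fixed $W_\star$ to arbitrary $(W,\Dcal)$, with the same norm bookkeeping. However, there is a genuine gap in the invariance step. Definition \ref{def:afinv} only guarantees $W_M^\top M\bx_i=W^\top\bx_i$ on the instances $\bx_i$ \emph{in the dataset}, not on a fresh test point $\bx\sim\Dcal$; yet the quantity you must control, $\E_{\bx\sim\Dcal}[(f(\tilde{W}^\top\bx)-h(W^\top\bx))^2]$, is an expectation over fresh points. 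So the assertion that ``$\Acal$'s output on the former equals, up to the coordinate change $M$, its output on the latter'' does not by itself transfer the $\epsilon$-guarantee from the transformed instance back to the original distribution. The paper closes this with \lemref{lem:allinv}: with high probability over an i.i.d.\ sample of size $m$, a fresh $\bx$ lies in the span of the sample except with probability $\Ocal(d/m)$, the two predictors agree on that span, and boundedness of $h$ then yields a final error of $\epsilon+\Ocal(d/m)$, handled by taking $m\geq d/\epsilon$. Some version of this argument is needed for your proof to go through.

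Two smaller issues. First, your handling of linear independence of the $\tilde{\bw}_i$ is not adequate as stated: restricting to a ``still hard'' subfamily requires proving that subfamily is hard, and perturbing the weights changes the target function whose labels you are given. The paper's \propref{prop:reduc} resolves this cleanly by padding $\bx\mapsto(\bx,0,\ldots,0)\in\{0,1\}^{d+n_d}$ and replacing $W$ by $[W;I_{n_d}]$, which preserves all function values while forcing $s_{\min}(W)\geq 1$. Second, your explicit $M^\top=WV_\star\Sigma_\star^{-1}U_\star^\top+(I-U_\star U_\star^\top)$ need not be invertible: if the column space of $W$ intersects the orthogonal complement of the column space of $W_\star$ nontrivially, one can exhibit a nonzero vector in its kernel. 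The paper instead sets $M^\top=[W~\hat{W}][W_\star~\hat{W}_\star]^{-1}$ with orthonormal completions $\hat{W},\hat{W}_\star$ on \emph{both} sides, which is always invertible and satisfies the same norm bound $\Ocal(d\sqrt{n_d})/\min\{1,s_{\min}(W_\star)\}$.
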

	Note that the result holds even if the returned predictor $f(W^\top \bx)$ has a different structure than $h_{W_\star}(\cdot)$, and $W$ is of a larger size than $W_\star$. Thus, it applies even if the algorithm is allowed to train a larger network or more complicated predictor than $h_{W_\star}(\cdot)$. 
	
	The proof (which is provided in \secref{sec:proofs}) can be sketched as follows: First, the hardness assumption for learning intersection of halfspaces is shown to imply hardness of learning networks $\bx\mapsto h(W^\top \bx)$ as described above (and even if $W$ has linearly independent columns -- a restriction which will be important later). However, this only implies that no algorithm can learn $\bx\mapsto h(W^\top \bx)$ for \emph{all} $W$ and all input distributions $\Dcal$. In contrast, we want to show that learning would be difficult even for \emph{some fixed} $W_\star$. To do so, we show that if an algorithm is linearly invariant, then the ability to learn with respect to some $W$ and all distributions $\Dcal$ means that we can learn with respect to all $W$ and all $\Dcal$. Roughly speaking, we argue that for linearly-invariant algorithms, ``average-case'' and ``worst-case'' hardness are the same here. Intuitively, this is because given some arbitrary $W,\Dcal$, we can create a different input distribution $\tilde{\Dcal}$, so that $W,\tilde{\Dcal}$ ``look like'' $W_\star,\Dcal$ under some linear transformation (see Figure \ref{fig:trans} for an illustration). Therefore, a linearly-invariant algorithm which succeeds on one will also succeed on the other.
	
	\begin{figure}[t]
		\centering
		\includegraphics[trim=0cm 0.2cm 0cm 0.6cm, clip=true,scale=0.7]{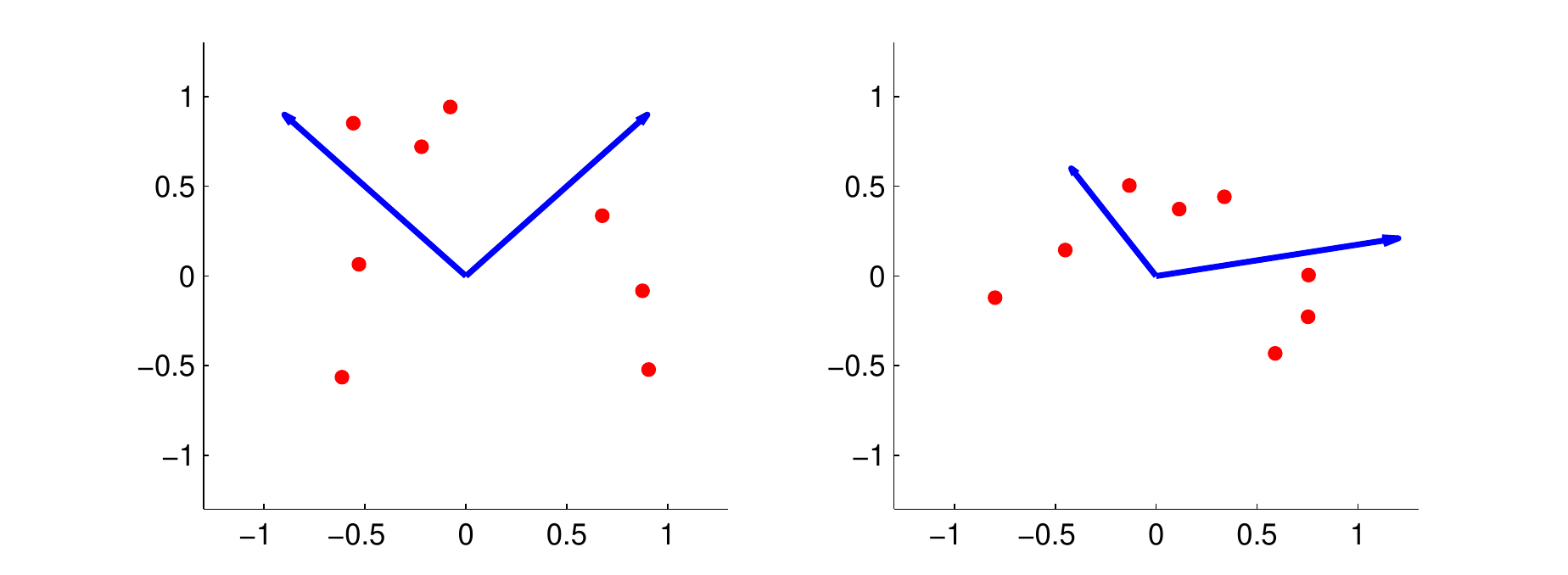}
		\caption{Correspondence between $W_\star,\Dcal$ (left figure) and $W,\tilde{\Dcal}$ (right figure). Arrows correspond to columns of $W_\star$ and $W$, and dots correspond to the support of $\Dcal$ and $\tilde{\Dcal}$. $\tilde{\Dcal}$ is constructed so that the same linear transformation mapping $W_\star$ to $W$ also maps $\Dcal$ to $\tilde{\Dcal}$. }\label{fig:trans}
	\end{figure}

	A bit more formally, let us fix some $W_\star$ (with linearly independent columns), and suppose we have a linearly-invariant algorithm which can successfully learn $\bx\mapsto h(W_\star^\top \bx)$ with respect to any input distribution. Let $W,\Dcal$ be some other matrix and distribution with respect to which we wish to learn (where $W$ has full column rank and is of the same size as $W_\star$). Then it can be shown that there is an invertible matrix $M$ such that $W=M^\top W_\star$. Since the algorithm successfully learns $\bx\mapsto h(W_\star^\top \bx)$ with respect to any input distribution, it would also successfully learn if we use the input distribution $\tilde{\Dcal}$ defined by sampling $\bx\sim \Dcal$ and returning $M\bx$. This means that the algorithm would succesfully learn from data distributed as
	\[
	(\bx,h(W_\star^\top\bx)~,~\bx\sim\tilde{\Dcal}~~\Longleftrightarrow~~(M\bx,h(W_\star^\top(M\bx)))~,~\bx\sim\Dcal~~\Longleftrightarrow~~(M\bx,h(W^\top \bx))~,~\bx\sim\Dcal.
	\] 
	Since the algorithm is linearly-invariant, it can be shown that this implies successful learning from $(\bx,h(W^\top \bx))$ where $\bx\sim\Dcal$, as required.
	
	In the sketch above, we have ignored some technical issues. For example, we need to be careful that $M$ has a bounded spectral norm, so that it induces a linear transformation which does not distort norms by too much (as all our arguments apply for input distributions supported on a bounded domain). A second issue is that if we apply a linearly-invariant algorithm on a dataset transformed by $M$, then the invariance is only with respect to the data, not necessarily with respect to new instances $\bx$ sampled from the same distribution (and this restriction is necessary for results such as \thmref{thm:affuninv} to hold without further assumptions). However, it can be shown that if the dataset is large enough, invariance will still occur with high probability over the sampling of $\bx$, which is sufficient for our purposes.

	\section{Natural Input Distributions}\label{sec:input}
	
	In this section, we consider the difficulty of gradient-based methods to learn certain target functions, even with respect to smooth, well-behaved distributions over $\reals^d$. Specifically, we will consider functions of the form $\bx\mapsto \psi(\inner{\bw^\star,\bx})$, where $\bw^\star$ is a vector of bounded norm and $\psi$ is a periodic function. Note that if $\psi$ is continuous and piecewise linear, then $\psi(\inner{\bw^\star,\bx})$ can be implemented by a depth-2 neural ReLU network on any bounded subset of the domain. More generally, any continuous periodic function can be approximated arbitrarily well by such networks. 
	
	Our formal results rely on Fourier analysis and are a bit technical. Hence, we precede them with an informal description, outlining the main ideas and techniques, and presenting a specific case study which may be of independent interest (Subsection \ref{subsec:informal}). The formal results are presented in Subsection \ref{subsec:formal}. 
	
	\subsection{Informal Description of Results and Techniques}\label{subsec:informal}
	
	Consider a target function of the form $\bx\mapsto \psi(\inner{\bw^\star,\bx})$, and any input distribution whose density function can be written as the square $\varphi^2$ of some function $\varphi$ (the reason for this will become apparent shortly). Suppose we attempt to learn this target function (with respect to the squared loss) using \emph{some} hypothesis class, which can be parameterized by a bounded-norm vector $\bw$ in some subset $\Wcal$ of an Euclidean space (not necessarily of the same dimensionality as $\bw^\star$), so each predictor in the class can be written as $\bx\mapsto f(\bw,\bx)$ for some fixed mapping $f$. Thus, our goal is essentially to solve the stochastic optimization problem
	\begin{equation}\label{eq:objfun}
	\min_{\bw:\bw\in \Wcal}~ \E_{\bx\sim\varphi^2}\left[\left(f(\bw,\bx)-\psi(\inner{\bw^\star,\bx})\right)^2\right].
	\end{equation}
	In this section, we  study the geometry of this objective function, and show that under mild conditions on $f$, and assuming the norm of $\bw^\star$ is reasonably large, the following holds:
	\begin{itemize}
		\item For any fixed $\bw$, the value of the objective function is almost independent of $\bw^\star$, in the sense that if we pick the direction of $\bw^\star$ uniformly at random, the value is extremely concentrated around a fixed value independent of $\bw^\star$ (e.g. exponentially small in $\norm{\bw^\star}^2$ for a Gaussian or a mixture of Gaussians). 
		\item Similarly, the gradient of the objective function with respect to $\bw$ is almost independent of $\bw^\star$, and is extremely concentrated around a fixed value (again, exponentially small in $\norm{\bw^\star}^2$ for, say, a mixture of Gaussians).
	\end{itemize}
	Therefore, assuming $\norm{\bw^\star}$ is reasonably large, any standard gradient-based method will follow a trajectory nearly independent of $\bw^\star$. In fact, in practice we do not even have access to exact gradients of \eqref{eq:objfun}, but only to noisy and biased versions of it (e.g. if we perform stochastic gradient descent, and certainly if we use finite-precision computations). In that case, the noise will completely obliterate the exponentially small signal about $\bw^\star$ in the gradients, and will make the trajectory essentially independent of $\bw^\star$. As a result, assuming $\psi$ and the distribution is such that the function $\psi(\inner{\bw^\star,\bx})$ is sensitive to the direction of $\bw^\star$, it follows that these methods will fail to optimize \eqref{eq:objfun} successfully. Finally, we note that in practice, it is common to solve not \eqref{eq:objfun} directly, but rather its empirical approximation with respect to some fixed finite training set. Still, by concentration of measure, this empirical objective would converge to the one in \eqref{eq:objfun} given enough data, so the same issues will occur.
	
	An important feature of our results is that they make virtually no 
	structural 
	assumptions on the predictors $\bx\mapsto f(\bw,\bx)$. In particular, they 
	can represent arbitrary classes of neural networks (as well as other 
	predictor classes). Thus, our results imply that target functions of the 
	form $\bx\mapsto 
	\psi(\inner{\bw^\star,\bx})$, where $\psi$ is periodic, would be difficult 
	to learn using gradient-based methods, even if we allow improper learning 
	and consider predictor classes of a different structure.
	
	\begin{figure}[t]
		\centering
		\includegraphics[trim=2cm 10.5cm 2cm 10.5cm, clip=true,scale=0.8]{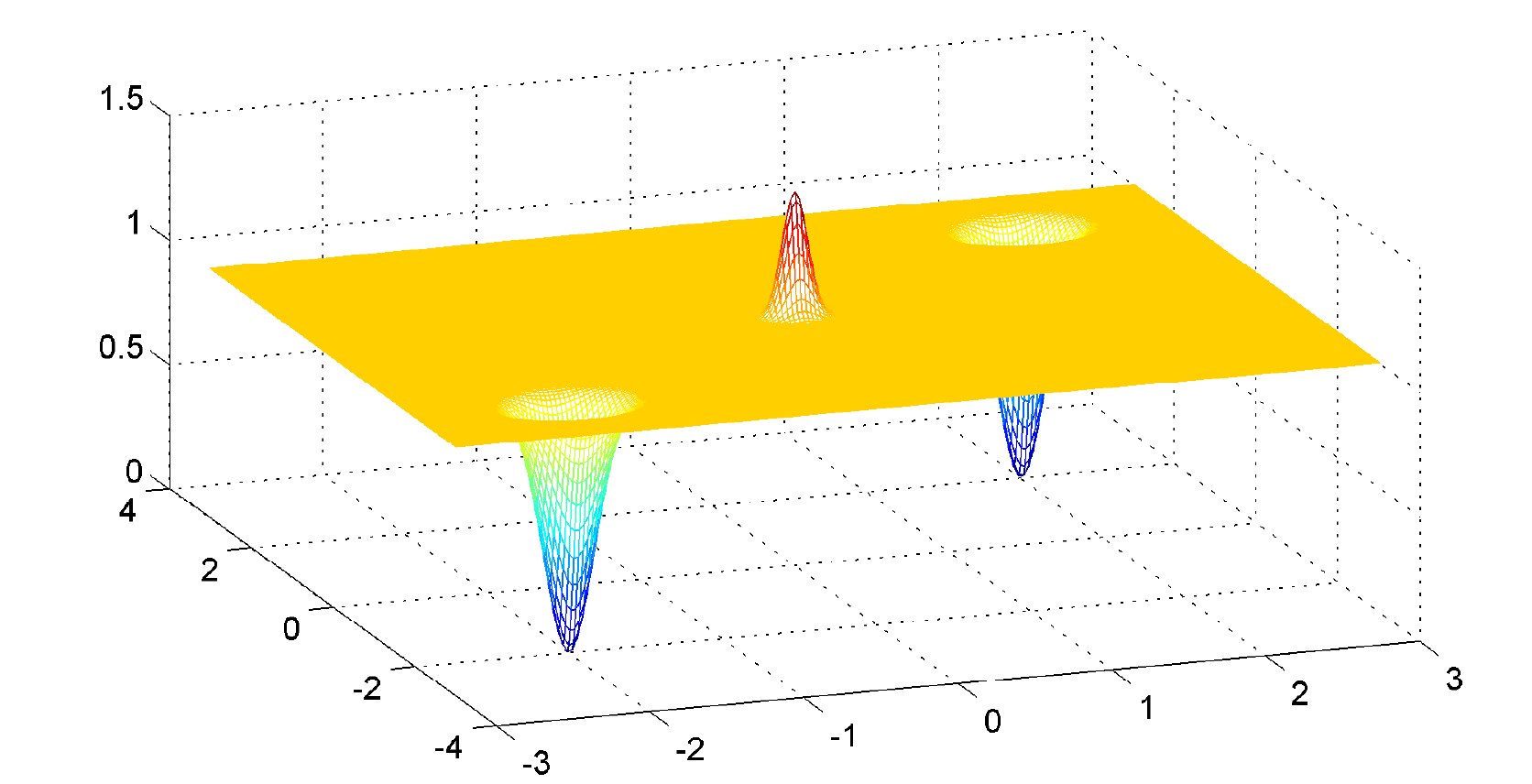}
		\caption{Graphical depiction of the objective function in \eqref{eq:objfuncos}, in 2 dimensions and where $\bw^\star=(2,2)$.}\label{fig:obj}
	\end{figure}
	
	To explain how such results are attained, let us study a concrete special 
	case (not necessarily in the context of neural networks). Consider the 
	target function $\bx\mapsto \cos(2\pi\inner{\bw^\star,\bx})$, and the 
	hypothesis class (parameterized by $\bw$) of functions $\bx\mapsto 
	\cos(2\pi\inner{\bw,\bx})$. Thus, \eqref{eq:objfun} takes the form
	\begin{equation}\label{eq:objfuncos}
	\min_{\bw} \E_{\bx\sim\varphi^2}\left[\left(\cos(2\pi\inner{\bw,\bx})-\cos(2\pi\inner{\bw^\star,\bx}\right)^2\right].
	\end{equation}
	Furthermore, suppose the input distribution  $\varphi^2$ is a standard 
	Gaussian on $\reals^d$. In two dimensions and for $\bw^\star=(2,2)$, the 
	objective function in \eqref{eq:objfun} turns out to have the form 
	illustrated in Figure \ref{fig:obj}. This objective function has only three 
	critical points: A global maximum at $\mathbf{0}$, and two global minima at 
	$\bw^\star$ and $-\bw^\star$. Nevertheless, it would be difficult to 
	optimize using gradient-based methods, since it is extremely \emph{flat} 
	everywhere except close to the critical points. As we will see shortly, the 
	same phenomenon occurs in higher dimensions. In high dimensions, if the 
	direction of $\bw^\star$ is chosen randomly, we will be overwhelmingly 
	likely to initialize far from the global minima, and hence will start in a 
	flat plateau in which most gradient-based methods will 
	stall\footnote{Although there are techniques to overcome flatness (e.g. by 
	normalizing the gradient \cite{nesterov1984minimization,hazan2015beyond}), 
	in our case the normalization factor will be huge and require extremely 
	precise gradient information, which as discussed earlier, is unrealistic 
	here.}.
	
	We now turn to explain why \eqref{eq:objfuncos} has the form shown in Figure \ref{fig:obj}. This will also help to illustrate our proof techniques, which apply much more generally. The main idea is to analyze the Fourier transform of \eqref{eq:objfuncos}. Letting $\cos_{\bw}$ denote the function $\bx\mapsto \cos(2\pi\inner{\bw,\bx})$, we can write \eqref{eq:objfuncos} as
	\[
	\int \left(\cos(2\pi\inner{\bw,\bx})-\cos(2\pi\inner{\bw^\star,\bx}\right)^2\varphi^2(\bx)d\bx ~=~ \norm{\cos_{\bw}\cdot\varphi-\cos_{\bw^\star}\cdot\varphi}^2,
	\]
	where $\norm{\cdot}$ is the standard norm over the space $L^2(\reals^d)$ of square integrable functions. By standard properties of the Fourier transform (as described in \secref{sec:prelim}), this squared norm of a function equals the squared norm of the function's Fourier transform, which equals in turn
	\[
	\norm{\widehat{\cos_{\bw}\cdot \varphi}-\widehat{\cos_{\bw}\cdot \varphi}}^2
	~=~ \norm{\widehat{\cos_{\bw}}*\hat{\varphi}-\widehat{\cos_{\bw^\star}}*\hat{\varphi}}^2.
	\]
	$\widehat{\cos_{\bw}}(\bxi)$ can be shown to equal $\frac{1}{2}\left(\delta(\bxi-\bw)+\delta(\bxi+\bw)\right)$, where $\delta(\cdot)$ is Dirac's delta function (a ``generalized'' function which satisfies $\delta(\bz)=0$ for all $\bz\neq \mathbf{0}$, and $\int \delta(\bz)d\bz=1$). Plugging this into the above and simplifying, we get
	\begin{align}
	&\frac{1}{4}\norm{\hat\varphi(\cdot-\bw)+\hat\varphi(\cdot+\bw)-\hat\varphi(\cdot-\bw^\star)-\hat\varphi(\cdot+\bw^\star)}^2\notag\\
	&~=~ \frac{1}{4}\int_{\bxi}\left|\hat\varphi(\bxi-\bw)+\hat\varphi(\bxi+\bw)-\hat\varphi(\bxi-\bw^\star)-\hat\varphi(\bxi+\bw^\star)\right|^2d\bxi.\label{eq:fourierobjbefore}
	\end{align}
	If $\varphi^2$ is a standard Gaussian, $\hat{\varphi}(\bxi)$ can be shown to equal the Gaussian-like function $(4\pi)^{d/2}a^{-\norm{\bxi}^2}$ where $a=\exp(4\pi^2)$. Plugging back, the expression above is proportional to 
	\begin{equation}\label{eq:fourierobj}
	\int_{\bxi} \left(\left(a^{-\norm{\bxi-\bw}^2}+a^{-\norm{\bxi+\bw}^2}\right)-\left(a^{-\norm{\bxi-\bw^\star}^2}+a^{-\norm{\bxi+\bw^\star}^2}\right)\right)^2d\bxi.
	\end{equation}
	The expression in each inner parenthesis can be viewed as a mixture of two Gaussian-like functions, with centers at $\bw,-\bw$ (or $\bw^\star,-\bw^\star$). Thus, if $\bw$ is far from $\bw^\star$, these two mixtures will have nearly disjoint support, and \eqref{eq:fourierobj} will have nearly the same value regardless of $\bw$ -- in other words, it is very flat. Since this equation is nothing more than a re-formulation of the original objective function in \eqref{eq:objfuncos} (up to a constant), we get a similar behavior for \eqref{eq:objfuncos} as well.
	
	This behavior extends, however, much more generally than the specific objective in \eqref{eq:objfuncos}. First of all, we can replace the standard Gaussian distribution $\varphi^2$ by any distribution such that $\hat{\varphi}$ has a localized support. This would still imply that \eqref{eq:fourierobjbefore} refers to the difference of two functions with nearly disjoint support, and the same flatness phenomenon will occur. Second, we can replace the $\cos$ function by any periodic function $\psi$. By properties of the Fourier transform of periodic functions, we still get localized functions in the Fourier domain (more precisely, the Fourier transform will be localized around integer multiples of $\bw$, up to scaling). Finally, instead of considering hypothesis classes of predictors $\bx\mapsto\psi(\inner{\bw,\bx})$ similar to the target function, we can consider quite arbitrary mappings $\bx\mapsto f(\bw,\bx)$. Even though this function may no longer be localized in the Fourier domain, it is enough that only the target function $\bx\mapsto \psi(\inner{\bw^\star,\bx})$ will be localized: That implies that regardless how $f$ looks like, under a random choice of $\bw^\star$, only a minuscule portion of the $L_2$ mass of $f$ overlaps with the target function, hence getting sufficient signal on $\bw^\star$ will be difficult. 
	
	As mentioned in the introduction, these techniques and observations have 
	some close resemblance to hardness results for learning parities over the 
	Boolean cube in the statistical queries learning model 
	\cite{blum1994weakly}. There as well, one considers a Fourier transform 
	(but on the Boolean cube rather than Euclidean space), and essentially show 
	that functions with a ``localized'' Fourier transform are difficult to 
	``detect'' using any fixed function. However, our results are different and 
	more general, in the sense that they apply to generic smooth distributions 
	over Euclidean space, and to a general class of periodic functions, rather 
	than just parities. On the flip side, our results are constrained to 
	methods which are based on gradients of the objective, whereas the 
	statistical queries framework is more general and considers algorithms 
	which are based on computing (approximate) expectations of arbitrary 
	functions of the data. Extending our results to this generality is an 
	interesting topic for future research.

	\subsection{Formal Results}\label{subsec:formal}
	
	We now turn to provide a more formal statement of our results. The distributions we will consider consist of arbitrary mixtures of densities, whose square roots have rapidly decaying tails in the Fourier domain. More precisely, we have the following definition:
		
		\begin{definition}
			Let $\epsilon(r)$ be some function from $[0,\infty)$ to $[0,1]$. A function $\varphi^2:\reals^d\rightarrow\reals$ is \emph{$\epsilon(r)$ Fourier-concentrated} if its square root $\varphi$ belongs to $L^2(\reals^d)$, and satisfies
			\[
			\norm{\hat{\varphi} \cdot \mathbf{1}_{\geq r}} ~\leq~ \norm{\hat{\varphi}}\epsilon(r),
			\]
			where $\mathbf{1}_{\geq r}$ is the indicator function of $\{\bx:\norm{\bx}\geq r\}$.
		\end{definition}
		
		A canonical example is Gaussian distributions: Given a (non-degenerate, 
		zero-mean) Gaussian density function $\varphi^2$ with covariance matrix 
		$\Sigma$, its square root $\varphi$ is proportional to a Gaussian with 
		covariance $2\Sigma$, and its Fourier transform $\hat{\varphi}$ is 
		well-known to be proportional to a Gaussian with covariance 
		$(2\Sigma)^{-1}$. By standard Gaussian concentration results, it 
		follows that $\varphi^2$ is Fourier-concentrated with 
		$\epsilon(r)=\exp(-\Omega(\lambda_{\min}r^2))$ where $\lambda_{\min}$ 
		is the minimal eigenvalue of $\Sigma$. A similar bound can be shown 
		when the Gaussian has some arbitrary mean. More generally, it is 
		well-known that smooth functions (differentiable to sufficiently high 
		order with integrable derivatives) have Fourier transforms with rapidly 
		decaying tails. For example, if we consider the broad class of 
		\emph{Schwartz} functions (characterized by having values and all 
		derivatives decaying faster than polynomially in $r$), then the 
		Fourier transform of any such function is also a Schwartz function, 
		which implies super-polynomial decay of $\epsilon(r)$ (see for instance 
		\cite{HNBook}, Chapter 11 and Proposition 11.25).

	We now formally state our main result for this section. We consider \emph{any} predictor of the form $\bx\mapsto f(\bw,\bx)$, where $f$ is some fixed function and $\bw$ is a parameter vector coming from some domain $\Wcal$, which we will assume w.l.o.g. to be a subset of some Euclidean space\footnote{More generally, our analysis is applicable to any separable Hilbert space.} (for example, $f$ can represent a network of a given architecture, with weights specified by $\bw$). When learning $f$ based on data coming from an underlying distribution, we are essentially attempting to solve the optimization problem
	\[
	\min_{\bw\in\Wcal} F_{\bw^\star}(\bw) := \E_{\bx\sim \varphi^2}\left[\left(f(\bw,\bx)-\psi(\inner{\bw^\star,\bx})\right)^2\right].
	\]
	Assume that $F$ is differentiable w.r.t. $\bw$, any gradient-based method to solve this problem proceeds by computing (or approximating) $\nabla F_{\bw^\star}(\bw)$ at various points $\bw$. However, the following theorem shows that at \emph{any} $\bw$, and \emph{regardless} of the type of predictor or network one is attempting to train, the gradient at $\bw$ is virtually independent of the underlying target function, and hence provides very little signal: 
	
	\begin{theorem}\label{thm:main}
		Suppose that
		\begin{itemize}
			\item $\psi:\reals\rightarrow[-1,+1]$ is a periodic function of period $1$, which has bounded variation on every finite interval. \item $\varphi^2$ is a density function on $\reals^d$, which can be written as a (possibly infinite) mixture $\varphi^2=\sum_i \alpha_i \varphi_i^2$, where each $\varphi_i$ is an $\epsilon(r)$ Fourier-concentrated density function.
			\item At some fixed $\bw$, 
			$\E_{\bx\sim\varphi^2}\norm{\frac{\partial}{\partial 
			\bw}f(\bw,\bx)}^2\leq G_{\bw}$ for some $G_{\bw}$.
		\end{itemize}	
		Then for some universal positive constants $c_1,c_2,c_3$, if $d\geq c_1$, and $\bw^\star\in\reals^d$ is a vector of norm $2r$ chosen uniformly at random, then
		\[
		\var_{\bw^\star}\left(\nabla F_{\bw^\star}(\bw)\right)~:=~ 
		\E_{\bw^\star}\left\|\nabla F_{\bw^\star}(\bw)-\E_{\bw^\star}[\nabla 
		F_{\bw^\star}(\bw)]\right\|^2~\leq~ c_2 
		G_{\bw}\left(\exp(-c_3d)+\sum_{n=1}^{\infty}\epsilon(nr)\right).
		\]
	\end{theorem}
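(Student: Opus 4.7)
The plan is to express the variance of $\nabla F_{\bw^\star}$ as a Fourier-analytic quantity and then control it by two separate mechanisms: Fourier decay of $\varphi$, which is responsible for the $\sum_n\epsilon(nr)$ contribution, and spherical measure concentration of the random direction $\bw^\star$, which is responsible for the $\exp(-c_3d)$ contribution.

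First I would use the mixture representation $\varphi^2=\sum_i\alpha_i\varphi_i^2$ and convexity of variance to reduce to a single Fourier-concentrated density. Expanding the squared loss in $F_{\bw^\star}$ shows that the only $\bw^\star$-dependent piece of the gradient is $-2\,\mathbf{I}(\bw^\star)$, where $\mathbf{I}(\bw^\star):=\E_{\bx\sim\varphi^2}[\psi(\langle\bw^\star,\bx\rangle)\nabla_\bw f(\bw,\bx)]$, so $\var(\nabla F_{\bw^\star})=4\var(\mathbf{I})$. Applying Plancherel, together with the Fourier series $\psi(t)=\sum_nc_ne^{2\pi int}$ and the shift property of the Fourier transform, each scalar coordinate $\mathbf{I}_g$ of $\mathbf{I}$ (corresponding to a coordinate $g$ of $\nabla_\bw f$) admits the representation $\mathbf{I}_g(\bw^\star)=\langle\widehat{g\varphi},\,\Phi_{\bw^\star}\rangle$, where $\Phi_{\bw^\star}(\bxi):=\sum_nc_n\hat\varphi(\bxi-n\bw^\star)$. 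Cauchy--Schwarz together with $\sum_g\|\widehat{g\varphi}\|^2=\E\|\nabla_\bw f\|^2\leq G_\bw$ then reduces the task to bounding the Hilbert-space variance $\E\|\Phi_{\bw^\star}\|^2-\|\E_{\bw^\star}\Phi_{\bw^\star}\|^2$ by the right-hand side of the theorem.

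I would bound this by a second application of Parseval. Expanding the squared norm yields the clean identity $\|\Phi_{\bw^\star}\|^2=\sum_kr_k\,\widehat{\varphi^2}(k\bw^\star)$, where $r_k=\sum_nc_n\overline{c_{n+k}}$ are the Fourier coefficients of $\psi^2$ and satisfy $|r_k|\leq r_0\leq 1$. A short convolution-and-split argument applied to $\widehat{\varphi^2}=\hat\varphi\ast\hat\varphi$ (split the integral at radius $\|\bv\|/2$ and apply Cauchy--Schwarz with the Fourier-concentration hypothesis) produces the pointwise bound $|\widehat{\varphi^2}(\bv)|\leq 2\epsilon(\|\bv\|/2)$, which combined with $\|k\bw^\star\|=2|k|r$ gives the contribution of order $\sum_{n=1}^\infty\epsilon(nr)$ from the $k\neq 0$ terms. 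For the $\|\E\Phi\|^2$ term I would introduce an independent copy $\bu^\star$ on the same sphere, write $\|\E\Phi\|^2=\sum_{n,m}c_n\overline{c_m}\,\E_{\bw^\star,\bu^\star}\widehat{\varphi^2}(m\bu^\star-n\bw^\star)$, and then invoke spherical concentration: because $\langle\bw^\star,\bu^\star\rangle$ is sub-Gaussian at scale $1/\sqrt d$ when $\bw^\star,\bu^\star$ are independent uniform on the sphere of radius $2r$, outside an event of probability at most $\exp(-c_3d)$ the shift $m\bu^\star-n\bw^\star$ has norm of order $r\sqrt{m^2+n^2}$ and is again controlled by the same $\epsilon$-estimate; on the exceptional event the trivial bound $|\widehat{\varphi^2}|\leq 1$ produces the announced $\exp(-c_3d)$ term.

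The main obstacle I expect is the aggregation of Fourier modes. Since $\psi$ is only assumed to be of bounded variation, $|c_n|=O(1/|n|)$ and $\sum|c_n|=\infty$, so any Minkowski-style reduction $\|\sum_nc_nX_n\|\leq\sum_n|c_n|\,\|X_n\|$ would produce a divergent prefactor multiplying the $\exp(-c_3d)$ contribution. The reason for working with the quadratic quantity $\|\Phi_{\bw^\star}\|^2$ rather than with $\Phi_{\bw^\star}$ itself is precisely to repackage the $c_n$ into the autocorrelation coefficients $r_k$ of $\psi^2$, which are uniformly bounded by $1$ without requiring any $\ell^1$-summability, so that only the $\epsilon$-factors and the single spherical concentration probability $\exp(-c_3d)$ survive. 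A secondary subtlety is that the spherical-concentration probability must be bounded uniformly in $n$, which is exactly what the choice of sphere radius $2r$ (rather than $r$) is designed to achieve, giving an angular threshold that does not degrade as $n$ grows.
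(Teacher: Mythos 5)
Your Fourier-analytic setup matches the paper's up to the point where you decouple $\widehat{g\varphi}$ from $\Phi_{\bw^\star}$: the representation $\mathbf{I}_g(\bw^\star)=\inner{\widehat{g\varphi},\Phi_{\bw^\star}}$ with $\Phi_{\bw^\star}=\sum_n c_n\hat\varphi(\cdot-n\bw^\star)$ is exactly the paper's lemma on $\widehat{\psi_{\bw}\varphi}$, and your pointwise bound $|\widehat{\varphi^2}(\bv)|\leq 2\epsilon(\norm{\bv}/2)$ is its lemma on near-orthogonality of shifted copies of $\hat\varphi$. But the Cauchy--Schwarz reduction to the Hilbert-space variance $\E\norm{\Phi_{\bw^\star}}^2-\norm{\E\Phi_{\bw^\star}}^2$ is fatally lossy, and your own computation shows why. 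The diagonal ($k=0$) term of $\norm{\Phi_{\bw^\star}}^2=\sum_k r_k\widehat{\varphi^2}(k\bw^\star)$ is $r_0=\sum_n|c_n|^2$, deterministic and of order one, while $\norm{\E\Phi_{\bw^\star}}^2$ collapses by your decoupling argument to $|c_0|^2$ plus negligible terms, since for $(n,m)\neq(0,0)$ the shift $m\bu^\star-n\bw^\star$ is typically far from the origin. Hence $\E\norm{\Phi_{\bw^\star}}^2-\norm{\E\Phi_{\bw^\star}}^2=\sum_{n\neq 0}|c_n|^2+o(1)$, a constant (equal to about $1/2$ for $\psi(t)=\cos(2\pi t)$), so your final bound is $\var(\nabla F_{\bw^\star}(\bw))\lesssim G_{\bw}$ --- the trivial estimate that follows from $|\psi|\leq 1$ with no Fourier analysis, and with no $\exp(-c_3 d)$ term. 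Intuitively, $\Phi_{\bw^\star}$ genuinely moves a constant $L^2$-distance as the direction of $\bw^\star$ varies (the translates $\hat\varphi(\cdot-n\bw^\star)$ for well-separated directions are nearly orthogonal), so its Hilbert-space variance cannot be small; what is small is its correlation with any single \emph{fixed} $g$.

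The missing idea is to keep $g$ inside the expectation and exploit that, as $\bw^\star$ ranges over exponentially many well-separated directions on the sphere, the functions $\Phi_{\bw^\star}-c_0\hat\varphi$ live (up to $\epsilon$-tails) on \emph{pairwise disjoint} regions of frequency space, namely unions of balls of radius $r$ around the nonzero integer multiples of $\bw^\star$. A fixed $g$ has total mass $\norm{g}^2$ to distribute among these $\exp(cd)$ disjoint regions, so on average over the direction it overlaps each one with mass only $\exp(-cd)\norm{g}^2$; this is where the $\exp(-c_3d)$ term actually comes from (the paper implements it by writing the uniform direction as a random rotation of a fixed set of $\lceil\exp(cd)\rceil$ near-orthogonal vectors, and showing the associated sets $A_{\bw}$ are disjoint). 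This mechanism cannot be recovered from the concentration of $\inner{\bw^\star,\bu^\star}$ for two independent copies, which only controls the already-negligible cross terms of $\norm{\E\Phi_{\bw^\star}}^2$. Your observation about repackaging the non-summable $c_n$ into bounded autocorrelation-type coefficients is sound and is needed in the corrected argument too (the paper's version is a Cauchy--Schwarz step giving $\sum_{z_1\neq z_2}|a_{z_1}||a_{z_2}|\epsilon(|z_1-z_2|r)\leq 2\sum_{n\geq 1}\epsilon(nr)$), but it does not rescue the variance decomposition.
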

	We note that bounded variation is weaker than, say, Lipschitz continuity. 
	Assuming $\epsilon(r)$ decays rapidly with $r$ -- say, exponentially in 
	$r^2$ as is the case for a Gaussian mixture -- we get that the bound in the 
	theorem is on the order of $\exp(-\Omega(\min\{d,r^2\}))$.
	
	Overall, the theorem implies that if $r,d$ are moderately large, the 
	gradient of $F_{\bw^\star}$ at any point $\bw$ is extremely concentrated 
	around a fixed value, independent of $\bw^\star$. This implies that 
	gradient-based methods, which attempt to optimize $F_{\bw^\star}$ via 
	gradient information, are unlikely to succeed. One way to formalize this is 
	to consider any iterative algorithm (possibly randomized), which relies on 
	an 
	\emph{$\varepsilon$-approximate} gradient oracle to optimize 
	$F_{\bw^\star}$: 
	At every iteration $t$, the algorithm chooses a point $\bw_t\in\Wcal$, and 
	receives 
	a vector $\bg_t$ such that $|\nabla F_{\bw^\star}-\bg_t|\leq \varepsilon$. 
	In 
	our case, we will be interested in $\varepsilon$ such that $\varepsilon^3$ 
	is on the 
	order of the bound in \thmref{thm:main}. Since the bound is extremely 
	small for moderate $d,r$ (say, smaller than machine precision), this is a 
	realistic model of gradient-based methods on finite-precision machines, 
	even if one attempts 
	to compute the gradients accurately. The following theorem implies that 
	if the number of iterations is not extremely large (on the order of 
	$1/\varepsilon$, e.g. $\exp(\Omega(d,r^2))$ iterations for Gaussian 
	mixtures), then 
	with high probability, a gradient-based method will return the same 
	predictor independent of $\bw^\star$. However, since the objective function
	$F_{\bw^\star}$ is highly sensitive to the choice of $\bw^\star$, this 
	means that no such  gradient-based method can train a reasonable predictor.
	\begin{theorem}\label{thm:trajectory}
	Assume the conditions of \thmref{thm:main}, and let $\varepsilon = 
	\sqrt[3]{c_2 
	(\sup_{\bw\in\Wcal}G_{\bw})\left(\exp(-c_3d)
	+\sum_{n=1}^{\infty}\epsilon(nr)\right)}$
	be the cube root of the bound specified there (uniformly over
	all $\bw\in\Wcal$). Then 
	for any algorithm as above and any $p\in (0,1)$, conditioned on an 
	event which holds with probability $1-p$ over the choice of 
	$\bw^\star$, its output after at most
	$p/\varepsilon$ iterations will be independent of 
	$\bw^\star$. 
	\end{theorem}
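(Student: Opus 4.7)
The plan is to exhibit, for each realization of the algorithm's random coins, a single reference trajectory that (a) depends only on the coins and not on $\bw^\star$, yet (b) is a valid execution of the algorithm on $F_{\bw^\star}$ with probability at least $1-p$ over $\bw^\star$. Define the reference oracle $\bar\bg(\bw):=\E_{\bw^\star}[\nabla F_{\bw^\star}(\bw)]$; since the expectation is taken over $\bw^\star$, the value $\bar\bg(\bw)$ depends on $\bw$ alone. Imagine running the algorithm with a gradient oracle that returns $\bar\bg(\bw_t)$ at every query $\bw_t$. The resulting sequence of queries $\bw_1,\ldots,\bw_T$ (with $T\leq p/\varepsilon$) and the final output $\Acal^\star$ are measurable with respect to the coin flips and $\bar\bg$ only, hence independent of $\bw^\star$ by construction.

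The key step is to argue that, with probability at least $1-p$ over $\bw^\star$, the reference responses $\bar\bg(\bw_1),\ldots,\bar\bg(\bw_T)$ are all admissible $\varepsilon$-approximate gradients of $F_{\bw^\star}$ at the respective query points. Condition on any realization of the coin flips, which fixes $\bw_1,\ldots,\bw_T$ as deterministic points of $\Wcal$. Applying \thmref{thm:main} at each such $\bw_t$, with $G_{\bw_t}\leq \sup_{\bw\in\Wcal}G_{\bw}$, bounds the vector variance by $\varepsilon^3$, so Markov's inequality on the squared deviation gives
\[
\Pr_{\bw^\star}\!\left(\norm{\nabla F_{\bw^\star}(\bw_t)-\bar\bg(\bw_t)}>\varepsilon\right)~\leq~\frac{\varepsilon^3}{\varepsilon^2}~=~\varepsilon.
\]
A union bound over $t=1,\ldots,T\leq p/\varepsilon$ then yields a total failure probability of at most $p$.

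On the complementary good event, the reference oracle returns, at every queried point, a vector that the true $\varepsilon$-approximate oracle is allowed to return; thus the reference trajectory is a legitimate execution of the algorithm on $F_{\bw^\star}$, and its output $\Acal^\star$ is a permissible output of the algorithm. Since $\Acal^\star$ is independent of $\bw^\star$, the conclusion follows after marginalizing over the coins. The only real subtlety in the plan is the choice of exponent: setting $\varepsilon$ equal to the \emph{cube} root of the bound in \thmref{thm:main} is precisely what makes Markov's inequality yield an $\varepsilon$-level tail which, multiplied by the $p/\varepsilon$ iterations permitted by the theorem, leaves a clean $p$ after the union bound. The heavy lifting sits entirely in \thmref{thm:main}; the remainder is a standard Markov-plus-union-bound coupling.
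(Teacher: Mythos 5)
Your proposal is correct and follows essentially the same route as the paper: the paper likewise defines an oracle that returns $\E_{\bw^\star}[\nabla F_{\bw^\star}(\bw)]$ whenever the true gradient is within $\varepsilon$ of it, applies \thmref{thm:main} with Chebyshev's inequality at each (deterministically determined) query point to get a per-step failure probability of $\varepsilon^3/\varepsilon^2=\varepsilon$, and concludes by a union bound over the $T\leq p/\varepsilon$ iterations. Your formulation via a fixed reference trajectory is just a slightly more explicit packaging of the paper's inductive argument.
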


	\section{Proofs}\label{sec:proofs}

	\subsection{Proof of \thmref{thm:affuninv}}
	
	Let $P_M$ denote the whitening matrix employed if we transform the instances $X$ by some invertible $d\times d$ matrix $M$ (that is, $X$ becomes $MX$), and $P$ the whitening matrix employed for the original data.
	
	Using the same notation as in the theorem, it is easily verified that $P X = V^\top$, and $P_{M} MX = V_{M}^\top$, where $U_{M} D_{M} V_{M}^\top$ is an SVD decomposition of the matrix $MX$. Since both $V^\top$ and $V_{M}^\top$ are $\text{Rank}(X)\times m$ matrices with rows consisting of orthonormal vectors, they are related by an orthogonal transformation (i.e. there is an orthogonal matrix $R_M$ such that $R_M V^\top = V_{M}^\top$). Therefore, $R_M P X = P_M M X$. Since the data is fed to an orthogonally-invariant algorithm, its output $W_M$ satisfies $W_M^\top P_{M} M X = W^\top PX$. This in turn implies $W_M^\top R_M P X = W^\top P X$, and hence $W_M^\top R_M V^\top = W^\top V^\top$. Multiplying both sides on the right by $V$ and taking a transpose, we get that $R_M^\top W_M = W$, and hence $W_M = R_M W$. In words, $W$ and $W_M$ are the same up to an orthogonal transformation $R_M$ depending on $M$. Therefore,
	\[
	(P_M^\top W_M)^\top MX = W_M^\top P_M MX = W^\top R_M^\top R_M P X = W^\top P X = (P^\top W)^\top X,
	\]
	so we see that the returned predictor makes the same predictions over the dataset, independent of the transformation matrix $M$.

	\subsection{Proof of \thmref{thm:maintarget}}
	
	We start with the following auxiliary theorem, which reduces the hardness result of \cite{daniely2016complexity} to one about neural networks of the type we discuss here:
	
	\begin{theorem}\label{thm:reduc}
		Under the assumption stated in \cite{daniely2016complexity}, the following holds for any $n_d=\omega(1)$ (as $d\rightarrow \infty$):
		
		There is no algorithm running in time $\text{poly}(d,1/\epsilon)$, which for any distribution $\Dcal$ on $\{0,1\}^d$, and any $h(W^\top \bx)=\sigma(\sum_{i=1}^{n_d}[\inner{\bw_i,\bx}]_+)$ (where $W=[\bw_1,\bw_2,\ldots,\bw_{n_d}]$ and $\max_i \norm{\bw_i}\leq \Ocal(d)$), given only access to samples $(\bx,h(W^\top\bx))$ where $\bx\sim\Dcal$, returns with high probability a function $f$ such that
		\[
		\E_{\bx\sim \Dcal}\left[\left(f(\bx)-h(W^\top\bx)\right)^2\right]\leq \epsilon.
		\]
	\end{theorem}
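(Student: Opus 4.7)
The plan is to prove \thmref{thm:reduc} by reducing PAC-learning of intersections of halfspaces (hard under the assumption of \cite{daniely2016complexity}) to learning the clipped-ReLU-sum networks described in the theorem. Suppose toward contradiction that some $\text{poly}(d,1/\epsilon)$-time algorithm $\Acal$ satisfies the conclusion of \thmref{thm:reduc} for some $n_d=\omega(1)$. I would use $\Acal$ as a black box to PAC-learn intersections of $n_d$ halfspaces in the setting of \cite{daniely2016complexity}, contradicting that assumption.

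The key construction is a correspondence, on Boolean inputs, between an intersection of halfspaces and the target network. Given halfspaces $\bigwedge_{i=1}^{n_d}(\inner{\bw_i,\bx}\geq b_i)$ on $\{0,1\}^d$ with integer $\bw_i,b_i$ and $\max_i \norm{(\bw_i,b_i)}\leq \Ocal(d)$, I would lift the inputs by appending a constant coordinate $\bx':=(\bx,1)\in\{0,1\}^{d+1}$, and define $\tilde{\bw}_i:=(-\bw_i,b_i)$. Then $\inner{\tilde{\bw}_i,\bx'}=b_i-\inner{\bw_i,\bx}$, and since all entries are integers and $\bx\in\{0,1\}^d$, this quantity is $\leq 0$ exactly when the $i$-th halfspace constraint holds and $\geq 1$ otherwise. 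Consequently $\sum_i [\inner{\tilde{\bw}_i,\bx'}]_+$ equals $0$ when the whole intersection holds and is $\geq 1$ otherwise, so the $[\cdot]_{[0,1]}$ clipping yields exactly $h(\tilde{W}^\top \bx')=1-\mathbf{1}\bigl[\bigwedge_i(\inner{\bw_i,\bx}\geq b_i)\bigr]$. The norm bound transfers directly ($\norm{\tilde{\bw}_i}=\norm{(\bw_i,b_i)}\leq \Ocal(d)$), the dimension grows by one, and the number of neurons is unchanged, so the hypothesized $\Acal$ still applies.

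To simulate $\Acal$'s oracle, I would convert each halfspace-learning sample $(\bx,y)$ (with $y=\bigwedge_i (\inner{\bw_i,\bx}\geq b_i)\in\{0,1\}$) into the pair $(\bx',1-y)$ and feed it to $\Acal$ with squared-error target $\epsilon/4$. The returned $f$ satisfies $\E[(f(\bx')-(1-y))^2]\leq \epsilon/4$, so by Markov's inequality $\Pr[|f(\bx')-(1-y)|\geq \tfrac{1}{2}]\leq \epsilon$. Thresholding $f$ at $\tfrac{1}{2}$ and flipping yields a Boolean predictor of the intersection with $0$-$1$ error at most $\epsilon$. The reduction runs in $\text{poly}(d,1/\epsilon)$ time and inherits the high-probability success of $\Acal$, contradicting the hardness assumption of \cite{daniely2016complexity}.

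The main obstacle is essentially bookkeeping rather than a conceptual hurdle: one has to verify the integrality-gap argument carefully (which is precisely why the integer-weight assumption in \cite{daniely2016complexity} is indispensable --- a continuous version would allow $\sum_i [\inner{\tilde{\bw}_i,\bx'}]_+$ to take values strictly between $0$ and $1$, breaking the exact correspondence), and confirm that norms, dimension, and number of neurons all fall within the ranges stated in \thmref{thm:reduc}. Beyond these checks, the reduction is routine.
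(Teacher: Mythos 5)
Your reduction is correct and is essentially identical to the paper's own proof: both append a constant coordinate to absorb the bias, use the integrality of $\bw_i,b_i$ to show the clipped ReLU sum computes the negated intersection exactly, and round the regressor at $1/2$ to recover a Boolean predictor (the paper gets $8\epsilon$ via a pointwise bound on the indicator, you get $\epsilon$ via Markov; either suffices since $\epsilon$ is arbitrary). The only cosmetic difference is that the paper works with distributions on $\{0,1\}^{d-1}\times\{1\}$ rather than lifting from $d$ to $d+1$.
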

	
	\begin{proof}
		Suppose by contradiction that there exists an algorithm $\Acal$ which for any distribution and $h_W$ as described in the theorem, returns a function $f$ such that $\E_{\bx\sim\Dcal}\left[\left(f(\bx)-h(W^\top\bx)\right)^2\right]\leq \epsilon$ with high probability.
		
		In particular, let us focus on distributions $\Dcal$ supported on $\{0,1\}^{d-1}\times \{1\}$. For these distributions, we argue that any intersection of halfspaces on $\reals^{d-1}$ specified by  $\bw_1,\ldots,\bw_{n_d}\in \reals^{d-1}$ with integer coordinates, and integer $b_1,\ldots,b_n$, can be specified as $\bx\mapsto 1-h(W^\top\bx)$ for some function $h$ as described in the theorem statement. To see this, note that for any $\bw_i,b_i$ and $\bx=(\bx',1)$ in the support of $\Dcal$, $\inner{(-\bw_i,b_i),\bx} = -\inner{\bw_i,\bx'}+b_i$ is an integer, hence 
		\begin{align*}
		\sigma\left(\sum_{i=1}^{n_d}[\inner{(-\bw_i,b_i),\bx}]_+\right) &=
		\sigma\left(\sum_{i=1}^{n_d}[-\inner{\bw_i,\bx'}+b_i]_+\right)~=~
		\bigvee_{i=1}^{n_d} \left(-\inner{\bw_i,\bx'}+b_i>0\right)\\
		&=\bigvee_{i=1}^{n_d} \left(\inner{\bw_i,\bx'}< b_i\right) ~=~
		\neg\left(\bigwedge_{i=1}^{n_d}\left(\inner{\bw_i,\bx'}\geq b_i\right)\right).
		\end{align*}
		Therefore, for any distribution over examples labelled by an intersection of halfspaces $\bx\mapsto 1-h(W^\top \bx)$ (with integer-valued coordinates and bounded norms), by feeding $\Acal$ with $\{\bx_i,1-y_i\}_{i=1}^{m}$, the algorithm returns a function $f$, such that with high probability, $\E_{\bx}\left(f(\bx)-h(W^\top \bx)\right)^2\leq \epsilon$, and therefore
		\[
		\E_{\bx}\left((1-f(\bx))-(1-h(W^\top\bx))\right)^2  \leq \epsilon.
		\]
		In particular, if we consider the Boolean function $\tilde{f}(\bx) = 1-\text{rnd}(f(\bx))$, where $\text{rnd}(z)=0$ if $z\leq 1/2$ and $\text{rnd}(z)=1$ if $z>1/2$, we argue that $\Pr_{\bx}(\tilde{f}(\bx)\neq 1-h(W^\top\bx))\leq 8\epsilon$. Since $\epsilon$ is arbitrary, and $1-h(W^\top\bx)$ specifies an intersection of halfspaces, this would contradict the hardness result of \cite{daniely2016complexity}, and therefore prove the theorem. This argument follows from the following chain of inequalities, where $\mathbf{1}$ denotes the indicator function:
		\begin{align*}
		\Pr\left(\tilde{f}(\bx))\neq g(\bx)\right) &= \Pr(f(\bx)> 1/2~\wedge~g(\bx)=1)+\Pr(f(\bx)\leq 1/2~\wedge~g(\bx)=0)\\
		&=\E\left[\mathbf{1}\left(f(\bx)> 1/2~\wedge~g(\bx)=1\right)\right]+\E\left[\mathbf{1}\left(f(\bx)\leq 1/2~\wedge~g(\bx)=0\right)\right]\\
		&\leq \E\left[4\left((1-f(\bx))-g(\bx)\right)^2\right]+\E\left[4\left((1-f(\bx))-g(\bx)\right)^2\right]\\
		&\leq 8\cdot\E\left[\left((1-f(\bx))-g(\bx)\right)^2\right] ~\leq~ 8\epsilon.
		\end{align*}
	\end{proof}
	
	\begin{proposition}\label{prop:reduc}
		\thmref{thm:reduc} holds even if we restrict $\bw_1,\ldots,\bw_{n_d}$ to be linearly independent, with $s_{\min}(W)\geq 1$.
	\end{proposition}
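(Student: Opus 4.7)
I would prove Proposition \ref{prop:reduc} by a straightforward dimension-padding reduction to Theorem \ref{thm:reduc}. Assume for contradiction that there exists a $\text{poly}(d,1/\epsilon)$-time algorithm $\Acal'$ achieving the conclusion of Theorem \ref{thm:reduc} restricted to matrices $W$ with linearly independent columns and $s_{\min}(W) \geq 1$. I will use $\Acal'$ to build an algorithm $\Acal$ without these restrictions, contradicting Theorem \ref{thm:reduc}.

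\textbf{Construction.} Given $W = [\bw_1,\ldots,\bw_{n_d}]\in\reals^{d\times n_d}$ with $\max_i \|\bw_i\|\leq \Ocal(d)$ and a distribution $\Dcal$ on $\{0,1\}^d$, define the padded matrix
\[
\tilde{W} \;=\; \begin{bmatrix} W \\ I_{n_d} \end{bmatrix} \;\in\; \reals^{(d+n_d)\times n_d},
\]
and the augmented distribution $\tilde{\Dcal}$ on $\{0,1\}^{d+n_d}$ obtained by appending $n_d$ zero coordinates to each $\bx\sim\Dcal$. A direct computation gives $\tilde{W}^\top\tilde{W}=W^\top W+I$, so the columns of $\tilde{W}$ are linearly independent and $s_{\min}(\tilde{W})=\sqrt{s_{\min}(W)^2+1}\geq 1$; also $\max_i \|\tilde{\bw}_i\|=\sqrt{\|\bw_i\|^2+1}=\Ocal(d)=\Ocal(d+n_d)$. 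Since the last $n_d$ coordinates of $\tilde{\bx}$ vanish, $\tilde{W}^\top\tilde{\bx}=W^\top\bx$, so $h(\tilde{W}^\top\tilde{\bx})=h(W^\top\bx)$ and labels are preserved exactly.

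\textbf{Reduction.} Given samples $(\bx,h(W^\top\bx))$ with $\bx\sim\Dcal$, $\Acal$ pads each example and feeds $(\tilde{\bx},h(\tilde{W}^\top\tilde{\bx}))$ to $\Acal'$ in ambient dimension $d'=d+n_d$. By assumption $\Acal'$ returns, with high probability, a function $f'$ with $\E_{\tilde{\bx}\sim\tilde{\Dcal}}[(f'(\tilde{\bx})-h(\tilde{W}^\top\tilde{\bx}))^2]\leq\epsilon$, and setting $f(\bx):=f'((\bx,\mathbf{0}))$ yields $\E_{\bx\sim\Dcal}[(f(\bx)-h(W^\top\bx))^2]\leq\epsilon$, contradicting Theorem \ref{thm:reduc}. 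The running time of $\Acal$ remains $\text{poly}(d,1/\epsilon)$ provided $n_d\leq\text{poly}(d)$ (which is harmless since we may pick any $\omega(1)$ sequence, e.g. $n_d=\lfloor\log\log d\rfloor$), and the condition $n_{d'}=\omega(1)$ needed to invoke $\Acal'$ is inherited along the subsequence $d'=d+n_d\to\infty$.

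\textbf{Where the difficulty (isn't).} The entire argument is parameter bookkeeping; no substantive obstacle arises, because the padding enforces both the linear-independence constraint and the lower bound $s_{\min}(\tilde{W})\geq 1$ for free, while simultaneously preserving every other structural property required by Theorem \ref{thm:reduc} (Boolean support, $\Ocal(d)$ norm bound, polynomial-size representation, and $\omega(1)$ growth of the number of hidden units). The only thing worth double-checking is that the function $h$ used for $\tilde{W}$ is the \emph{same} fixed $\sigma(\sum_i [\cdot]_+)$ as for $W$ — and it is, since padding $\bx$ with zeros does not change which ReLU pre-activations are active.
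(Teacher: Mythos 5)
Your proof is correct and uses essentially the same argument as the paper: pad each Boolean input with $n_d$ zero coordinates and stack $I_{n_d}$ below $W$, so that $\tilde{W}^\top\tilde{W}=W^\top W+I_{n_d}$ forces $s_{\min}(\tilde{W})\geq 1$ while preserving labels, norms (up to an additive $1$ per column), and the ambient dimension up to a factor of $2$. The parameter bookkeeping you carry out (running time, the $\omega(1)$ condition along $d'=d+n_d$) matches what the paper does.
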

	\begin{proof}
		Suppose by contradiction that there exists an algorithm $\Acal$ which succeeds for any $W$ as stated above. We will describe how to use $\Acal$ to get an algorithm which succeeds for any $W$ as described in \thmref{thm:reduc}, hence reaching a contradiction.
		
		Specifically, suppose we have access to samples $(\bx,h(W^\top \bx))$, where $\bx$ is supported on $\{0,1\}^d$, and where $W$ is any matrix as described in \thmref{thm:reduc}. We do the following: We map every $\bx$ to $\tilde{\bx} \in \{0,1\}^{d+n_d}$ by $\tilde{\bx}=(\bx,0,\ldots,0)$, run $\Acal$ on the transformed samples $(\tilde{\bx},h(W^\top\bx))$ to get some predictor $\tilde{f}:\{0,1\}^{d+n_d}\mapsto \reals$, and return the predictor $f(\bx) = \tilde{f}((\bx,0,\ldots,0))$. 
		
		To see why this reduction works, we note that the mapping $\bx\mapsto\tilde{\bx}$ we have defined, where $\bx$ is distributed according to $\bx$, induces a distribution $\tilde{\Dcal}$ on  $\{0,1\}^{d+n_d}$. Let $\tilde{W}$ be the $(d+n_d)\times n_d$ matrix $[W;I_{n_d}]$ (that is, we add another $n_d\times n_d$ unit matrix below $W$). We have $\tilde{W}^\top\tilde{W} = W^\top W + I_{n_d}$, so the minimal eigenvalue of $\tilde{W}^\top\tilde{W}$ is at least $1$, hence $s_{\min}(\tilde{W})\geq 1$, so $\tilde{W}$ satisfies the conditions in the proposition. Moreover, the norm of each column of $\tilde{W}$ is larger than the norm of the corresponding column in $W$ by at most $1$, so the norm constraint in \thmref{thm:reduc} still holds. Finally, $\tilde{W}\tilde{\bx}=W\bx$ for all $\bx$, and therefore $h_{\tilde{W}}(\tilde{\bx}) = h_{W}(\bx)$. Thus, the distribution of $(\tilde{\bx},h(W^\top\bx))=(\tilde{\bx},h(\tilde{W}^\top\bx))$ (which is used to feed the algorithm $\Acal$) is a valid distribution corresponding to the conditions of the proposition and \thmref{thm:reduc}  (only in dimension $d+n_d\leq 2d$ instead of $d$), so $\Acal$ returns with high probability a predictor $\tilde{f}$ such that 
		\[
		\E_{\tilde{\bx}\sim \tilde{\Dcal}}\left[\left(\tilde{f}(\tilde{\bx})-h(\tilde{W}^\top\tilde{\bx})\right)^2\right]\leq \epsilon.
		\]	
		However, $\tilde{f}(\tilde{\bx})=f(\bx)$,  $h(\tilde{W}^\top\tilde{\bx}) = h(W^\top\bx)$, so the returned predictor $f$ satisfies
		\[ \E_{\bx\sim\Dcal}\left[\left(f(\bx)-h(W^\top\bx)\right)^2\right]\leq \epsilon.
		\]
		This contradicts \thmref{thm:reduc}, which states that no efficient algorithm can return such a predictor for \emph{any} sufficiently large dimension $d$ and norm bound $\Ocal(d)$. 
	\end{proof}
	
	In the definitions of orthogonal invariance and linear invariance, we only required the invariance to hold with respect to instances $\bx_i$ in the dataset. A stronger condition is that the invariance is satisfied for any $\bx\in \reals^d$. However, the following lemma shows that invariance w.r.t. a dataset sampled i.i.d. from some distribution is sufficient to imply invariance w.r.t. ``nearly all'' $\bx$ (under the same distribution):
	
	\begin{lemma}\label{lem:allinv}
		Suppose the dataset $\{\bx_i,y_i\}_{i=1}^{m}$ is sampled i.i.d. from some distribution (where $\bx_i\in \reals^d$), then the following holds with probability at least $1-\delta$ for any $\delta\in (0,1)$: For any invertible $M$ and linearly-invariant algorithm (or orthogonal $M$ and orthogonally-invariant algorithm), conditioned on the algorithm's internal randomness, the returned matrices $W$ and $W_M$ (with respect to the original data and the data transformed by $M$ respectively) satisfy
		\[
		{\Pr}_{\bx}(W_M^\top M\bx \neq W^\top \bx)\leq \frac{d}{\delta(m+1)}.
		\]
	\end{lemma}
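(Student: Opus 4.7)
The plan is to reduce the claim to a purely linear-algebraic fact about the span of the training inputs, and then bound the probability that a fresh $\bx$ falls outside that span via a symmetry argument followed by Markov. The invariance assumption gives that, on the dataset transformed by $M$, the algorithm returns a matrix $W_M$ with $W_M^\top M \bx_i = W^\top \bx_i$ for every training point. Setting $A := W_M^\top M - W^\top$, this says every $\bx_i$ lies in $\ker(A)$, and since $\ker(A)$ is a linear subspace, $A\bx = 0$ for every $\bx \in \mathrm{span}(\bx_1,\ldots,\bx_m)$. Thus the event $\{W_M^\top M \bx \neq W^\top \bx\}$ is contained in $\{\bx \notin \mathrm{span}(S)\}$, where $S = \{\bx_1,\ldots,\bx_m\}$. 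Crucially, this containment is purely geometric: it does not depend on $M$, on the algorithm's internal randomness, or on which invariance notion we use, which is exactly what lets us handle the ``for any $M$ and any (linearly- or orthogonally-)invariant algorithm'' quantifier uniformly.

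The next step is to bound, in expectation over $S$, the quantity $p(S) := \Pr_{\bx}(\bx \notin \mathrm{span}(S))$. I would draw one extra i.i.d. sample $\bx_{m+1}$ so that $\E_S[p(S)] = \Pr(\bx_{m+1} \notin \mathrm{span}(\bx_1,\ldots,\bx_m))$, and then exploit exchangeability of the $m+1$ samples to write
\[
\E_S[p(S)] \;=\; \frac{1}{m+1}\,\E\!\left[\sum_{i=1}^{m+1}\mathbf{1}\bigl[\bx_i \notin \mathrm{span}(\{\bx_j : j \neq i\})\bigr]\right].
\]
The key observation is that the set $I$ of indices for which $\bx_i \notin \mathrm{span}(\{\bx_j : j \neq i\})$ yields a linearly independent family $\{\bx_i\}_{i \in I}$ in $\reals^d$: any nontrivial relation $\sum_{i \in I} c_i \bx_i = 0$ with $c_{i^\star}\neq 0$ would place $\bx_{i^\star}$ in the span of the remaining $\bx_j$'s, contradicting $i^\star \in I$. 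Hence $|I|\leq d$ deterministically, the summand is at most $d$ pointwise, and $\E_S[p(S)] \leq d/(m+1)$.

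Finally, Markov's inequality gives $\Pr_S\bigl(p(S) > d/(\delta(m+1))\bigr) \leq \delta$. On the complementary high-probability event, which depends only on $S$, the containment from the first step yields
\[
\Pr_\bx\bigl(W_M^\top M\bx \neq W^\top \bx\bigr) \;\leq\; \Pr_\bx\bigl(\bx \notin \mathrm{span}(S)\bigr) \;\leq\; \frac{d}{\delta(m+1)}
\]
simultaneously for every invertible $M$ and every linearly-invariant algorithm (and likewise in the orthogonal case, which uses the identical invariance relation on the training inputs). The only mildly delicate step is the linear-algebra claim that $|I|\leq d$; everything else is bookkeeping, exchangeability, and Markov.
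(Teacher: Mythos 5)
Your proof is correct and follows essentially the same route as the paper: reduce the event to $\{\bx\notin\mathrm{span}(\bx_1,\ldots,\bx_m)\}$, bound the expected probability of that event by $d/(m+1)$ using an extra sample, and finish with Markov. The only difference is in the middle counting step, where the paper uses prefix spans $\mathrm{span}(\bx_1,\ldots,\bx_{j-1})$ together with monotonicity of the corresponding probabilities, while you use the leave-one-out spans and the observation that the "outside" indices form a linearly independent family of size at most $d$ — an equally valid (and arguably cleaner) way to get the same bound.
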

	\begin{proof}
		It is enough to prove that with probability at least $1-\delta$ over the sampling of $\bx_1,\ldots,\bx_m$,
		\begin{equation}\label{eq:spanshow} 
		{\Pr}_{\bx}(\bx\notin\text{span}(\bx_1,\ldots,\bx_m))\leq \frac{d}{\delta(m+1)}.
		\end{equation}
		This is because the event $W_M^\top M\bx_i = W^\top\bx_i$ for all $i$ means that $W_M^\top M\bx= W^\top\bx$ for any $\bx$ in the span of $\bx_1,\ldots,\bx_m$. 
		
		Let $\bx_1,\ldots,\bx_{m+1}$ be sampled i.i.d. according to $\Dcal$. Considering probabilities over this sample, we have
		\begin{equation}\label{eq:sumpr}
		\sum_{j=1}^{m+1}\Pr\left(\bx_{j}\notin \text{span}(\bx_1,\ldots,\bx_{j-1})\right) ~=~
		\E\left[\sum_{j=1}^{m+1}\mathbf{1}\left(\bx_{j}\notin \text{span}(\bx_1,\ldots,\bx_{j-1})\right)\right] \leq d,
		\end{equation}
		where the latter inequality is because each $\bx_j$ is a $d$-dimensional vector, hence the number of times we can get a vector not in the span of the previous ones is at most $d$. Moreover, since the vectors are sampled i.i.d, we have
		\[
		\Pr(\bx_{j+1}\notin \text{span}(\bx_1,\ldots,\bx_{j}))~\leq~ \Pr(\bx_{j+1}\notin \text{span}(\bx_1,\ldots,\bx_{j-1}))~=~\Pr(\bx_{j}\notin \text{span}(\bx_1,\ldots,\bx_{j-1})),
		\]
		so the probabilities in \eqref{eq:sumpr} monotonically decrease with $j$. Thus, \eqref{eq:sumpr} implies
		\[
		(m+1)\Pr\left(\bx_{m+1}\notin \text{span}(\bx_1,\ldots,\bx_{m})\right) \leq d ~~\Rightarrow~~ \Pr\left(\bx_{m+1}\notin \text{span}(\bx_1,\ldots,\bx_{m})\right)\leq \frac{d}{m+1}. 
		\]
		This is equivalent to
		\[
		\E_{\bx_1,\ldots,\bx_m\sim \Dcal}\left[{\Pr}_{\bx_{m+1}}\left(\bx_{m+1}\notin\text{span}(\bx_1,\ldots,\bx_m)\middle| \bx_1,\ldots,\bx_m\right)\right]\leq \frac{d}{m+1},
		\]
		so by Markov's inequality, with probability at least $1-\delta$ over the sampling of $\bx_1,\ldots,\bx_m$,
		\[
		{\Pr}_{\bx_{m+1}}\left(\bx_{m+1}\notin\text{span}(\bx_1,\ldots,\bx_m)\right) \leq \frac{d}{\delta(m+1)}.
		\]
		Since $\bx_{m+1}$ is sampled independently, \eqref{eq:spanshow} and hence the lemma follows.
	\end{proof}
	
	With these results in hand, we can finally turn to prove \thmref{thm:maintarget}. Suppose by contradiction that there exists an efficient linearly-invariant algorithm $\Acal$, which for any distribution $\Dcal$ supported on vectors of norm at most $\Ocal\left(d\sqrt{2 d n_d}\right)/\min\{1,s_{\min}(W_\star)\}$, returns w.h.p. a predictor $\bx\mapsto f(\tilde{W}^\top \bx)$ such that
	\[
	\E_{\bx\sim \Dcal^{\star}}\left[\left(f(\tilde{W}^\top\bx)-h(W_\star^\top\bx)\right)^2\right] \leq \epsilon.
	\]
	We will show that the very same algorithm, if given  $\text{poly}(d,1/\epsilon)$ samples, can successfully learn w.r.t. \emph{any} $d\times n_d$ matrix $W$ and any distribution $\Dcal$ satisfying Proposition \ref{prop:reduc} and \thmref{thm:reduc}, contradicting those results.
	
	Indeed, let $W$ and $\Dcal$ be an arbitrary matrix and distribution as above. We first argue that there exists a $d\times d$ invertible matrix $M$ such that
	\begin{equation}\label{eq:mmm}
	W = M^\top W_\star~~~,~~~ \norm{M} \leq \frac{\Ocal(d\sqrt{2n_d})}{\min\{1,s_{\min}(W_\star)\}}.
	\end{equation}
	To see this, note that $W$ and $W_\star$ are of the same size and our conditions imply that both of them have full column rank. Thus, we can simply augment them to invertible $d\times d$ matrices $[W~\hat{W}]$ and $[W_\star~\hat{W}_{\star}]$, where the columns of $\hat{W}$ (respectively $\hat{W}_{\star}$) are an orthonormal basis for the subspace orthogonal to the column space of $W$ (respectively $W_\star$), and choosing $M^\top = [W~\hat{W}] [W_\star~\hat{W}_{\star}]^{-1}$. Thus, $\norm{M} \leq \norm{[W~\hat{W}]}\cdot\norm{[W_\star~\hat{W}_{\star}]^{-1}}$. The spectral norm of $[W~\hat{W}]$ can be bounded by the Frobenius norm, which by the assumption on $W$ from \thmref{thm:reduc} and the fact that $\hat{W}$ consist of an orthonormal basis, is at most $\sqrt{\Ocal(d)^2\cdot n_d + 1\cdot (d-n_d)} = \Ocal(\sqrt{d^2 n_d}) = \Ocal(d\sqrt{n_d})$. The spectral norm of $[W_\star~\hat{W}_{\star}]^{-1}$ can be bounded by $1/s_{\min}([W_\star~\hat{W}_{\star}])$, where $s_{\min}([W_\star~\hat{W}_{\star}])$ equals the square root of the smallest eigenvalue of $[W_\star~\hat{W}_{\star}]^\top [W_\star~\hat{W}_{\star}]$, which can be easily verified to be $\min\{1,s_{\min}(W_\star)\}$. 
	
	Now, consider the following thought experiment: Suppose we would run the algorithm $\Acal$ with samples $(M\bx_i,h(W_\star^\top (M\bx_i))$, $i=1,2,\ldots,m$, where $\bx_i$ is sampled from $\Dcal$ (which by the assumptions of \thmref{thm:reduc}, is supported on vectors of norm at most $\sqrt{d}$). By \eqref{eq:mmm}, $M\bx_i$ is supported on the set of vectors of norm at most $\norm{M}\norm{\bx_i} \leq \norm{M}\sqrt{d}\leq \Ocal(d\sqrt{d n_d})/\min\{1,s_{\min}(W_\star)\}$, and the outputs correspond to the network specified by $W_\star$. Therefore, by assumption, the algorithm $\mathcal{A}$ would return w.h.p. a matrix $\tilde{W}_M$ such that
	\[
	\E_{\bx\sim \Dcal}\left[\left(f(\tilde{W}_M^\top (M\bx))-h(W_\star^\top(M\bx))\right)^2\right]\leq \epsilon.
	\]
	By \eqref{eq:mmm}, $W_\star^\top M = (M^\top W_\star)^\top = W^\top$, so this is equivalent to
	\begin{equation}\label{eq:Mtrans}
	\E_{\bx\sim \Dcal}\left[\left(f((\tilde{W}_M^\top M\bx)-h(W^\top\bx))\right)^2\right]\leq \epsilon.
	\end{equation}
	Let $\tilde{W}_I$ be the matrix returned by $\Acal$ if we had fed it with the samples $\{(\bx_i,h(W^\top \bx_i))\}_{i=1}^{m}$ (or equivalently, $\{(\bx_i,h(W_\star^\top (M\bx_i)))\}_{i=1}^{m}$)\footnote{Note that if the algorithm is stochastic, both $\tilde{W}_M$ and $\tilde{W}_I$ are not fixed given the data, but also depend on the algorithm's internal randomness. However, the proof will still follow by conditioning on any possible realization of this randomness.}. Let $E_{\bx}$ be the event (conditioned on the samples used by the algorithm) that a freshly sampled $\bx\sim \Dcal$ satisfies $\tilde{W}_M^\top M\bx = \tilde{W}_I^\top \bx$. By \lemref{lem:allinv}, w.h.p. over the samples fed to the algorithm, $\Pr_{\bx}(E_{\bx}) \geq 1-\Ocal(d/m)$. Therefore, w.h.p. over the samples $\bx_1,\ldots,\bx_m$,
	\begin{align*}
	\E_{\bx\sim\Dcal}&\left[\left(f(\tilde{W}^\top_I\bx)-h(W^\top \bx)\right)^2\right]\\
	&= \Pr_{\bx}(E_{\bx})\cdot	\E_{\bx}\left[\left(f(\tilde{W}^\top_I\bx)-h(W^\top \bx)\right)^2\middle|E_{\bx}\right]
	+\Pr_{\bx}(\neg E_{\bx})\cdot	\E_{\bx}\left[\left(f(\tilde{W}_I^\top\bx)-h(W^\top \bx)\right)^2\middle|\neg E_{\bx}\right]\\
	&\leq \Pr_{\bx}(E_{\bx})\cdot	\E_{\bx}\left[\left(f(\tilde{W}_M^\top M\bx)-h(W^\top \bx)\right)^2\middle|E_{\bx}\right]
	+\Pr_{\bx}(\neg E_{\bx})\cdot 1\\
	&= \E_{\bx}\left[\left(f(\tilde{W}_M^\top M\bx)-h(W^\top \bx)\right)^2\mathbf{1}(E_\bx)\right]+\Ocal\left(\frac{d}{m}\right)\\
	&\leq \E_{\bx}\left[\left(f(\tilde{W}_M^\top M\bx)-h(W^\top \bx)\right)^2\right]+\Ocal\left(\frac{d}{m}\right)~~\leq~~ \epsilon+\Ocal\left(\frac{d}{m}\right),
	\end{align*} 
	where we used the fact that $h$ maps $\bx$ to $[0,1]$, a union bound and \eqref{eq:Mtrans}. Now, recall that $\tilde{W}_I$ refers to the output of the algorithm, given samples $\{(\bx_i,h(W^\top \bx_i))\}_{i=1}^{m}$ where $m=\text{poly}(d,1/\epsilon)$. Thus, we have shown that w.h.p., as long as the algorithm is fed with $m\geq d/\epsilon$ samples\footnote{Even if the algorithm does not require that many samples, we can still artificially add more samples -- these are merely used to ensure that its linear invariance is with respect to a sufficiently large dataset.}, the algorithm returns $\tilde{W}_I$ which satisfies
	\[
	\E_{\bx\sim\Dcal}\left[\left(f(\tilde{W}_I^\top\bx)-h(W^\top \bx)\right)^2\right] ~\leq~ \Ocal(\epsilon).
	\]
	This means that the algorithm succesfully learns the hypothesis $\bx\mapsto h(W^\top \bx)$ with respect to the distribution $\Dcal$. Since $\epsilon$ is arbitrarily small and $W,\Dcal$ were chosen arbitrarily, the result follows.

	\subsection{Proof of \thmref{thm:main}}\label{subsec:proofthmmain}
	
	We note that for any function $q$, $\E_{\bx\sim \varphi^2}[q(\bx)] = \sum_{i}\alpha_i\cdot \E_{\bx\sim \varphi_i^2}[q(\bx)]$. Thus, it is enough to prove the bound in the theorem when $\varphi^2$ consists of a single element whose square root is Fourier-concentrated. For a mixture $\varphi^2 = \sum_i \alpha_i \varphi_i^2$, the result follows by applying the bound for each $\varphi_i^2$ individually, and using Jensen's inequality.
	
	To simplify notation a bit, we let $\psi_{\bw}(\cdot)$ stand for the function $\psi(\inner{\bw,\cdot})$, and let $h(\cdot-\bv)$ (where $\bv$ is some vector and $h$ is a function on $\reals^d$) stand for the function $\bx\mapsto h(\bx-\bv)$. Also, we will use several times the fact that for any two $L^2(\reals^d)$ functions $h_1,h_2$,
	\[
	\inner{h_1(\cdot-\bv),h_2(\cdot-\bv)} ~=~ \int h_1(\bx-\bv)\overline{h_2(\bx-\bv)}d\bx ~=~ \int h_1(\bx)\overline{h_2(\bx)}d\bx = \inner{h_1,h_2}.
	\]
	In other words, inner products (and hence also norms) in $L^2(\reals^d)$ are invariant to a shift in coordinates.
	
	The proof is a combination of a few lemmas, presented below.
	
	\begin{lemma}\label{lem:psivarph}
		For any $\bw$, it holds that $\psi_{\bw}\varphi\in L^2(\reals^d)$, and satisfies
		\[
		\widehat{\psi_{\bw}\varphi}(\bx) ~=~ \sum_{z\in\mathbb{Z}} a_z\cdot \hat{\varphi}(\bx-z\bw)
		\]
		for any $\bx$, where $\mathbb{Z}$ is the set of integers and  $a_z$ are complex-valued coefficients (corresponding to the Fourier series expansion of $\psi$, hence depending only on $\psi$) which satisfy $\sum_{z\in\mathbb{Z}}|a_z|^2 \leq 1$.
	\end{lemma}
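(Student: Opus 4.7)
The plan is to express $\psi$ through its Fourier series in one variable and then transfer the expansion to $\widehat{\psi_{\bw}\varphi}$ by linearity plus the modulation--translation duality of the Fourier transform. First, the membership $\psi_{\bw}\varphi\in L^2(\reals^d)$ is immediate from $|\psi|\le 1$, which yields the pointwise bound $|\psi_{\bw}(\bx)\varphi(\bx)|\le|\varphi(\bx)|$ and hence $\norm{\psi_{\bw}\varphi}\le\norm{\varphi}<\infty$. Next, since $\psi$ has period $1$ and is bounded by $1$, it lies in $L^2([0,1])$ with $\int_0^1|\psi|^2\le 1$, so Parseval supplies a Fourier series $\psi(t)=\sum_{z\in\mathbb{Z}}a_z e^{2\pi izt}$ (convergent in $L^2([0,1])$) with coefficients depending only on $\psi$ and satisfying the claimed bound $\sum_z|a_z|^2=\int_0^1|\psi|^2\le 1$.

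The core of the argument is a truncation and $L^2$-limit passage. I would work with the Fej\'er means $\sigma_N\psi(t)=\sum_{|z|\le N}\bigl(1-\tfrac{|z|}{N+1}\bigr)a_z e^{2\pi izt}$ rather than with the raw partial sums, since Fej\'er's theorem guarantees the uniform bound $|\sigma_N\psi|\le\norm{\psi}_\infty\le 1$ together with pointwise a.e.\ convergence $\sigma_N\psi\to\psi$. The uniform bound then gives $|\sigma_N\psi(\inner{\bw,\bx})-\psi(\inner{\bw,\bx})|^2\le 4$, and dominated convergence with the integrable majorant $4|\varphi(\bx)|^2$ yields $(\sigma_N\psi)_{\bw}\varphi\to\psi_{\bw}\varphi$ in $L^2(\reals^d)$. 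For each finite $N$, linearity of the Fourier transform together with the modulation--translation identity $\widehat{e^{2\pi i\inner{\bv,\cdot}}g}(\bxi)=\hat{g}(\bxi-\bv)$ gives the exact computation
\[
\widehat{(\sigma_N\psi)_{\bw}\varphi}(\bxi)=\sum_{|z|\le N}\Bigl(1-\tfrac{|z|}{N+1}\Bigr)a_z\,\hat{\varphi}(\bxi-z\bw).
\]
Since the Fourier transform is an $L^2$ isometry, the left-hand side converges in $L^2(\reals^d)$ to $\widehat{\psi_{\bw}\varphi}$, which identifies $\widehat{\psi_{\bw}\varphi}$ with the $L^2$-limit of the triangularly weighted partial sums on the right---precisely the expression $\sum_{z\in\mathbb{Z}}a_z\hat{\varphi}(\bxi-z\bw)$ appearing in the lemma, interpreted in the appropriate (Ces\`aro/$L^2$) summation sense.

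The principal subtlety, and the step I expect to be the main obstacle, is that the formal series $\sum_z a_z\hat{\varphi}(\bxi-z\bw)$ need not converge absolutely: only $\sum|a_z|^2\le 1$ is available, and the translates $\{\hat{\varphi}(\cdot-z\bw)\}_z$ are in general non-orthogonal, so neither absolute $L^2$ nor pointwise summability is automatic. Using Fej\'er means rather than the raw Dirichlet truncations resolves this, because the Fej\'er sums are uniformly bounded (so dominated convergence applies), while the Dirichlet sums can exhibit Gibbs-type overshoot that would require a separate uniform-boundedness argument using the bounded-variation hypothesis on $\psi$. With the Fej\'er route, the identity in the lemma is established as an equality of $L^2(\reals^d)$ functions (hence a.e.), which is exactly what the subsequent Fourier-analytic arguments in Section~\ref{subsec:proofthmmain} require.
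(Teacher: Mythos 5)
Your proof is correct, and it takes a genuinely different route from the paper's. The paper computes the Fourier transform of the periodic factor $\psi_{\bw}$ directly as a train of Dirac deltas, $\hat{\psi}_{\bw}=\sum_z a_z\,\delta(\cdot-z\bw)$, and then invokes the convolution theorem $\widehat{\psi_{\bw}\varphi}=\hat{\psi}_{\bw}*\hat{\varphi}$; it explicitly flags in a footnote that this manipulation lives in the world of generalized functions and defers formal justification to a reference. You instead stay entirely within classical $L^2$ theory: truncate $\psi$ by its Fej\'er means, compute the transform of each truncation exactly via linearity and the modulation--translation identity, and pass to the limit using the uniform bound $|\sigma_N\psi|\le 1$, dominated convergence, and Plancherel. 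What your route buys is a rigorous answer to the question the paper glosses over --- in what sense the series $\sum_z a_z\hat{\varphi}(\cdot-z\bw)$ converges --- and your diagnosis is exactly right that with only $\sum_z|a_z|^2\le 1$ and non-orthogonal translates, neither absolute nor unconditional convergence is automatic, so the identity must be read as a Ces\`aro/$L^2$ limit; that reading is sufficient for the inner-product manipulations in \lemref{lem:smallcorr}. What the paper's route buys is brevity, plus a cleaner conceptual picture (the transform of a periodic function is supported on the lattice $\{z\bw\}$), and it uses the bounded-variation hypothesis to assert everywhere-convergence of the raw Fourier series, which your Fej\'er argument renders unnecessary. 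One small elision on your side: the a.e.\ convergence $\sigma_N\psi(t)\to\psi(t)$ is with respect to Lebesgue measure on $\reals$, and you need it for a.e.\ $\bx\in\reals^d$ along $t=\inner{\bw,\bx}$; for $\bw\neq\mathbf{0}$ the exceptional set pulls back to a Lebesgue-null subset of $\reals^d$ (for BV $\psi$ it is in fact a countable union of hyperplanes), so this is a one-line fix, not a gap.
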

	\begin{proof}
			First, we note that $\psi_{\bw}\varphi\in L^2(\reals^d)$, since both $\psi_{\bw}$ and $\varphi$ are locally integrable by the theorem's conditions, and satisfy
			\[
			\norm{\psi_{\bw}\varphi}^2 = \int \psi^2_{\bw}(\bx)\varphi^2(\bx)d\bx = \int \psi^2(\inner{\bw,\bx})\varphi^2(\bx)d\bx \leq \int \varphi^2(\bx)d\bx = 1 < \infty.
			\]
			As a result, $\widehat{\psi_{\bw}\varphi}$ exists as a function in $L^2(\reals^d)$.
			Since $\psi$ is a bounded variation function, it is equal everywhere to its Fourier series expansion:
			\[
			\psi(x) = \sum_{z\in \mathbb{Z}} a_z \exp\left(2\pi izx\right),
			\]
			where $i$ is the imaginary unit (note that since $\psi$ is real-valued, the imaginary components eventually cancel out, but it will be more convenient for us to represent the Fourier series in this compact form). By Parseval's identity, $\sum_{z} |a_z|^2 =\int_{-1/2}^{1/2}\psi^2(x)dx$, which is at most $1$ (since $\psi(x)\in [-1,+1]$).
			
			Based on this equation, we have
			\[
			\psi_{\bw}(\bx) = \psi(\inner{\bw,\bx}) = \sum_{z\in \mathbb{Z}} a_z \exp\left(2\pi iz\inner{\bw,\bx}\right).
			\]
			We now wish to compute the Fourier transform of the above\footnote{Strictly speaking, this function does not have a Fourier transform in the sense of \eqref{eq:fourierdef}, since the associated integrals do not converge. However, the function still has a well-defined Fourier transform in the more general sense of a generalized function or distribution (see e.g. \cite{HNBook} for a survey). In the derivation below, we will simply rely on some standard formulas from the Fourier analysis literature, and refer to \cite{HNBook} for their formal justifications.}. First, we note that the Fourier transform of $\exp(2\pi i \inner{\bv,\cdot})$ is given by $\delta(\cdot - \bv)$, where $\delta$ is the Dirac delta function (a so-called generalized function which satisfies $\delta(\bx)=0$ for all $\bx\neq \mathbf{0}$, and $\int \delta(\bx)d\bx=1$). Based on this and the linearity of the Fourier transform, we have that
			\[
			\hat{\psi}_{\bw}(\bx)~=~ \sum_{z\in \mathbb{Z}} a_z\cdot \delta\left(\bx-z\bw\right),
			\]
			and therefore, by the convolution property of the Fourier transform, we have 
			\begin{align}
			\widehat{\psi_{\bw}\varphi}(\bx) &= \left(\hat{\psi}_{\bw}*\hat{\varphi}\right)(\bx) = \int \hat{\psi}_{\bw}(\bz)\cdot \hat{\varphi}(\bx-\bz)d\bz\notag\\
			&= \sum_{z\in\mathbb{Z}} a_z\cdot \int \delta(\bz-z\bw)\cdot \hat{\varphi}(\bx-\bz)d\bz\notag\\
			&= \sum_{z\in\mathbb{Z}} a_z\cdot \hat{\varphi}(\bx-z\bw)\notag
			\end{align}
			as required.
	\end{proof}
	
	\begin{lemma}\label{lem:compind}
		For any distinct integers $z_1\neq z_2$ and any $\bw$ such that $\norm{\bw}= 2r$, it holds that
		\[
		\inner{~\left|\hat{\varphi}(\cdot-z_1\bw)\right|~,~\left|\hat{\varphi}(\cdot-z_2\bw)\right|~} ~\leq~ 2\cdot\epsilon(|z_1-z_2|r).
		\]
	\end{lemma}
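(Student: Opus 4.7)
The plan is to exploit the simple geometric fact that the two factors $|\hat{\varphi}(\cdot - z_1\bw)|$ and $|\hat{\varphi}(\cdot - z_2\bw)|$ are translates of $|\hat{\varphi}|$ whose centers are at Euclidean distance $\norm{z_1\bw - z_2\bw} = 2|z_1-z_2|r$. Since the Fourier-concentration hypothesis controls the $L^2$-mass of $\hat{\varphi}$ outside a ball of radius $s$ by $\epsilon(s)$, if we split the integral into regions on which at least one of the two translates lies in the tail of $\hat{\varphi}$, Cauchy--Schwarz together with that tail bound will immediately yield the claim.

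Concretely, I would partition $\reals^d$ into the two half-spaces $H_1 = \{\bx : \norm{\bx-z_1\bw}\leq \norm{\bx-z_2\bw}\}$ and $H_2=\reals^d\setminus H_1$ determined by the perpendicular bisector of the segment from $z_1\bw$ to $z_2\bw$. On $H_1$, every $\bx$ satisfies $\norm{\bx-z_2\bw}\geq \tfrac{1}{2}\norm{z_1\bw-z_2\bw} = |z_1-z_2|r$, and symmetrically on $H_2$ with the roles of $z_1$ and $z_2$ swapped. Thus on $H_j$ the ``far'' translate is restricted to the region $\{\norm{\cdot - z_{3-j}\bw}\geq |z_1-z_2|r\}$, which after recentering is exactly the tail region on which Fourier-concentration gives control.

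I would then apply Cauchy--Schwarz on each half-space. For $H_1$,
\[
\int_{H_1}|\hat{\varphi}(\bx-z_1\bw)|\,|\hat{\varphi}(\bx-z_2\bw)|\,d\bx \;\leq\; \norm{\hat{\varphi}(\cdot-z_1\bw)}\cdot \norm{\hat{\varphi}(\cdot-z_2\bw)\,\mathbf{1}_{H_1}}.
\]
The first factor equals $\norm{\hat{\varphi}} = \norm{\varphi} = 1$ by Plancherel's isometry and the fact that (after the reduction to a single mixture component made at the start of the proof of \thmref{thm:main}) $\varphi^2$ is a probability density. For the second factor, bounding pointwise $\mathbf{1}_{H_1}(\bx)\leq \mathbf{1}_{\{\norm{\bx-z_2\bw}\geq |z_1-z_2|r\}}(\bx)$ and using translation invariance of the $L^2$-norm, it is at most $\norm{\hat{\varphi}\cdot \mathbf{1}_{\geq |z_1-z_2|r}} \leq \norm{\hat{\varphi}}\,\epsilon(|z_1-z_2|r) = \epsilon(|z_1-z_2|r)$ by the definition of Fourier-concentration. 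The symmetric estimate on $H_2$ gives the same bound, and summing the two contributions produces the factor of $2$ in the statement.

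There is no real obstacle here: the only subtleties are the normalization $\norm{\hat{\varphi}}=1$ and the pointwise indicator comparison $\mathbf{1}_{H_1}\leq \mathbf{1}_{\{\norm{\cdot-z_2\bw}\geq |z_1-z_2|r\}}$, both routine.
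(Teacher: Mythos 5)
Your proof is correct and follows essentially the same strategy as the paper's: split the domain into two regions on each of which one of the two translates is restricted to the tail $\{\bx:\norm{\bx-z_j\bw}\geq |z_1-z_2|r\}$, apply Cauchy--Schwarz on each piece, and invoke Fourier-concentration together with $\norm{\hat{\varphi}}=\norm{\varphi}=1$. The only difference is cosmetic: you split along the perpendicular bisector of the two centers, while the paper (after recentering so that $z_1=0$) uses the ball of radius $|z_1-z_2|r$ around one center and its complement; both decompositions deliver the same factor of $2$.
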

	\begin{proof}
		Let $\Delta=|z_2-z_1|r$, and $\bv=(z_2-z_1)\bw$, so $\bv$ is a vector of norm $2\Delta$. 
		Since the inner product is invariant to shifting the coordinates, we can assume without loss of generality that $z_1=0$, and our goal is to bound $\inner{|\hat{\varphi}|,|\hat{\varphi}(\cdot-\bv)|}$.
		
		Using the convention that $\mathbf{1}_{\leq \Delta}$ is the indicator of $\{\bx:\norm{\bx}\leq \Delta\}$, and $\mathbf{1}_{>\Delta}$ is the indicator for its complement, we have
		\begin{align*}
			\inner{\left|\hat{\varphi}\right|,\left|\hat{\varphi}(\cdot-\bv)\right|} &=
			\inner{\left|\hat{\varphi}\right|~,~\left|\hat{\varphi}(\cdot-\bv)\right|\mathbf{1}_{\leq \Delta}} + \inner{\left|\hat{\varphi}\right|~,~\left|\hat{\varphi}(\cdot-\bv)\right|\mathbf{1}_{> \Delta}}\\			
			&= \inner{\left|\hat{\varphi}\right|~,~\left|\hat{\varphi}(\cdot-\bv)\right|\mathbf{1}_{\leq \Delta}} + \inner{\left|\hat{\varphi}\right|\mathbf{1}_{> \Delta}~,~\left|\hat{\varphi}(\cdot-\bv)\right|}\\
			&\leq \norm{\hat{\varphi}} \norm{\hat{\varphi}(\cdot-\bv)\mathbf{1}_{\leq \Delta}} ~+~\norm{\hat{\varphi}\mathbf{1}_{> \Delta}} \norm{\hat{\varphi}(\cdot-\bv)} ,
		\end{align*}
		where in the last step we used Cauchy-Schwartz. Using the fact that norms and inner products are invariant to coordinate shifting, the above is at most
		\[
		\norm{\hat{\varphi}} \norm{\hat{\varphi}\mathbf{1}_{\leq \Delta}(\cdot+\bv)}+
		\norm{\hat{\varphi}\mathbf{1}_{> \Delta}} \norm{\hat{\varphi}}~=~
		\norm{\hat{\varphi}}\left(\sqrt{\int |\hat{\varphi}(\bx)|^2\mathbf{1}_{\norm{\bx+\bv}\leq \Delta}d\bx}
		+\sqrt{\int |\hat{\varphi}(\bx)|^2\mathbf{1}_{>\Delta}(\bx)d\bx}\right).
		\]
		By the triangle inequality and the assumption $\norm{\bv}= 2\Delta$, the event $\norm{\bx+\bv}\leq \Delta$ implies $\norm{\bx}\geq \Delta$. Therefore, the above can be upper bounded by
		\[
		2\norm{\hat{\varphi}}\sqrt{\int |\hat{\varphi}(\bx)|^2\mathbf{1}_{\geq \Delta}(\bx)d\bx}
		~=~ 2\norm{\hat{\varphi}}\cdot \norm{\hat{\varphi}\cdot \mathbf{1}_{\geq \Delta}}.
		\]
		Since $\varphi$ is Fourier-concentrated, this is at most $2\epsilon(\Delta) \norm{\hat{\varphi}}^2 = 2\epsilon(\Delta) \norm{\varphi}^2 = 2\epsilon(\Delta)$, where we use the isometry of the Fourier transform and the assumption that $\norm{\varphi}^2 = \int \varphi^2(\bx)d\bx=1$. Plugging back the definition of $\Delta$, the result follows.
	\end{proof}
	
	\begin{lemma}\label{lem:doublesum}
		It holds that
		\[
		\sum_{z_1\neq z_2\in \mathbb{Z}}|a_{z_1}|\cdot|a_{z_2}|\cdot\epsilon(r|z_1-z_2|)~\leq~2\sum_{n=1}^{\infty}\epsilon(nr)
		\]
	\end{lemma}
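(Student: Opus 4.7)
The plan is to reorganize the double sum by the absolute difference $n = |z_1 - z_2|$ and then apply Cauchy--Schwarz on the $\ell^2$ sequence $(|a_z|)_{z \in \mathbb{Z}}$, which satisfies $\sum_{z \in \mathbb{Z}} |a_z|^2 \leq 1$ by \lemref{lem:psivarph}.

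Specifically, I would first partition the set of pairs $\{(z_1, z_2) \in \mathbb{Z}^2 : z_1 \neq z_2\}$ according to the value $n := |z_1 - z_2| \in \{1, 2, 3, \dots\}$. For each fixed $n \geq 1$, the pairs with $|z_1 - z_2| = n$ are exactly $(z, z+n)$ and $(z+n, z)$ ranging over $z \in \mathbb{Z}$, so
\[
\sum_{\substack{z_1 \neq z_2 \\ |z_1-z_2|=n}} |a_{z_1}|\cdot|a_{z_2}| ~=~ 2\sum_{z \in \mathbb{Z}} |a_z|\cdot |a_{z+n}|.
\]
By Cauchy--Schwarz applied to the sequences $(|a_z|)_{z}$ and $(|a_{z+n}|)_{z}$, together with $\sum_z |a_z|^2 \leq 1$,
\[
\sum_{z \in \mathbb{Z}} |a_z|\cdot|a_{z+n}| ~\leq~ \sqrt{\sum_z |a_z|^2}\sqrt{\sum_z |a_{z+n}|^2} ~=~ \sum_z |a_z|^2 ~\leq~ 1.
\]

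Combining these observations,
\[
\sum_{z_1 \neq z_2} |a_{z_1}|\cdot|a_{z_2}|\cdot \epsilon(r|z_1-z_2|) ~=~ \sum_{n=1}^{\infty} \epsilon(nr) \sum_{\substack{z_1 \neq z_2 \\ |z_1-z_2|=n}} |a_{z_1}|\cdot|a_{z_2}| ~\leq~ 2\sum_{n=1}^{\infty}\epsilon(nr),
\]
which is exactly the desired bound. There is no real obstacle here: the only thing to be careful about is that the partition over $n$ is clean and that the factor of $2$ comes from counting ordered pairs (both $(z, z+n)$ and $(z+n, z)$) rather than from any looseness in Cauchy--Schwarz.
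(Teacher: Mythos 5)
Your proof is correct and uses essentially the same idea as the paper: Cauchy--Schwarz on the $\ell^2$ sequence $(|a_z|)_z$ together with $\sum_z|a_z|^2\leq 1$, with the factor of $2$ coming from the two signs of $z_1-z_2$. The only cosmetic difference is that you group the pairs by the difference $n=|z_1-z_2|$ first and apply Cauchy--Schwarz for each $n$ separately, whereas the paper applies a single global Cauchy--Schwarz with the weights $\sqrt{\epsilon'(|z_1-z_2|r)}$ and then takes a supremum over $z_1$; both routes are equally valid.
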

	\begin{proof}
		For simplicity, define $\epsilon'(x)=\epsilon(x)$ for all $x>0$, and $\epsilon(0)=0$. Then the expression in the lemma equals
		\begin{align*}
		&\sum_{z_1,z_2\in \mathbb{Z}}|a_{z_1}|\cdot |a_{z_2}|\cdot\epsilon'(|z_1-z_2|r)\\
		&=
		\sum_{z_1,z_2\in \mathbb{Z}}\left(|a_{z_1}|\sqrt{\epsilon'(|z_1-z_2|r)}\right)\left(|a_{z_2}|\sqrt{\epsilon'(|z_1-z_2|r)}\right)\\
		&\leq
		\sqrt{\sum_{z_1,z_2\in \mathbb{Z}}|a_{z_1}|^2 \epsilon'(|z_1-z_2|r)}\sqrt{\sum_{z_1,z_2\in \mathbb{Z}}|a_{z_2}|^2\epsilon'(|z_1-z_2|r)}\\
		&=
		\sum_{z_1,z_2\in \mathbb{Z}}|a_{z_1}|^2 \epsilon'(|z_1-z_2|r)		
		\end{align*}
		where in the last step we used the fact that the two inner square roots are the same up to a different indexing. Recalling the definition of $\epsilon'$ and that $\sum_{z}|a_z|^2\leq 1$, the above is at most
		\begin{align*}
		&\sum_{z_1\in \mathbb{Z}}|a_{z_1}|^2\sum_{z_2\in \mathbb{Z}} \epsilon'(|z_1-z_2|r)~\leq~
		\sup_{z_1\in \mathbb{Z}}\sum_{z_2\in \mathbb{Z}}\epsilon'(|z_1-z_2|r)\\
		&= \left(\epsilon'(0)+2\sum_{n=1}^{\infty}\epsilon'(nr)\right)~=~
		2\sum_{n=1}^{\infty}\epsilon(nr)~.
		\end{align*}
	\end{proof}
	
	\begin{lemma}\label{lem:smallcorr}
		For any $g\in L^2(\reals^d)$, if $d\geq c'$ (for some universal constant $c'$), and we sample $\bw$ uniformly at random from $\{\bw:\norm{\bw}=2r\}$, it holds that
		\[
		\E\left[\left(\inner{g,\widehat{\psi_{\bw}\varphi}} - a_0\inner{g,\hat{\varphi}}\right)^2\right]~\leq~ 	
		10\norm{g}^2\left(\exp(-cd)+\sum_{n=1}^{\infty}\epsilon(nr)\right)
		\]
		where $a_0$ is the coefficient from \lemref{lem:psivarph} and $c$ is a universal positive constant.
	\end{lemma}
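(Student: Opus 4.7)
The plan is to start from Lemma~\ref{lem:psivarph}, which gives $\widehat{\psi_{\bw}\varphi}(\bxi)=\sum_{z\in\mathbb{Z}}a_z\hat\varphi(\bxi-z\bw)$. Since $\psi$ is real, one has $a_z=\bar a_{-z}$ and in particular $a_0\in\reals$, so the $z=0$ term of the sum contributes exactly $a_0\inner{g,\hat\varphi}$. Subtracting it yields
\[
\inner{g,\widehat{\psi_{\bw}\varphi}}-a_0\inner{g,\hat\varphi}=\sum_{z\neq 0}\bar a_z\,h_z(\bw),\qquad h_z(\bw):=\inner{g,\hat\varphi(\cdot-z\bw)}.
\]
Squaring and taking expectation over $\bw$ uniform on $\{\|\bw\|=2r\}$ reduces the task to bounding
$\sum_{z_1,z_2\neq 0}\bar a_{z_1}a_{z_2}\E_{\bw}[h_{z_1}(\bw)\overline{h_{z_2}(\bw)}]$,
which I would split into a diagonal sum ($z_1=z_2$) and an off-diagonal sum ($z_1\neq z_2$), and attack by different mechanisms.

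For the \emph{diagonal terms}, the $\exp(-c_3 d)$ factor has to come from spherical concentration, not from Lemma~\ref{lem:compind}. For a fixed $z\neq 0$, I would split $h_z(\bw)$ using the indicator of $B(z\bw,R)$ for some radius $R$ (for instance $R=r$): the integral over $B(z\bw,R)^c$ is bounded by $\|g\|\cdot\|\hat\varphi\mathbf{1}_{>R}\|\leq\|g\|\epsilon(R)$ via Fourier concentration, and the integral over $B(z\bw,R)$ by $\|g\mathbf{1}_{B(z\bw,R)}\|$ via Cauchy-Schwarz. Squaring and averaging over $\bw$ then gives
$\E_{\bw}\|g\mathbf{1}_{B(z\bw,R)}\|^2=\int|g(\bxi)|^2\Pr_{\bw}(\|\bxi-z\bw\|\leq R)d\bxi$,
and the crucial input is that by L\'evy's lemma, for random $\bw$ on the $(d{-}1)$-sphere of radius $2r$ and any integer $z\neq 0$, the event $\|\bxi-z\bw\|\leq R$ forces the angle between $\bw$ and $\bxi$ to have cosine bounded away from zero, so its probability is at most $\exp(-c_3 d)$ uniformly in $\bxi$. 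Thus $\E|h_z(\bw)|^2\lesssim\|g\|^2(\exp(-c_3 d)+\epsilon(r)^2)$. Summing against the diagonal weights $|a_z|^2$ and using $\sum|a_z|^2\leq 1$ from Lemma~\ref{lem:psivarph} yields a diagonal contribution of order $\|g\|^2(\exp(-c_3 d)+\epsilon(r)^2)$; the $\epsilon(r)^2$ is absorbed into $\sum_n\epsilon(nr)$.

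For the \emph{off-diagonal terms} I would not take expectation first, but instead bound pointwise in $\bw$:
$|h_{z_1}(\bw)\overline{h_{z_2}(\bw)}|\leq\bigl(\int|g||\hat\varphi(\cdot-z_1\bw)|\bigr)\bigl(\int|g||\hat\varphi(\cdot-z_2\bw)|\bigr)$.
Writing the full off-diagonal sum as the difference between $(\sum_{z\neq 0}|a_z|\int|g||\hat\varphi(\cdot-z\bw)|)^2$ and its diagonal and applying Cauchy-Schwarz to the former by the function $\sum_{z\neq 0}|a_z||\hat\varphi(\cdot-z\bw)|$, one obtains the squared $L^2$-norm of that function, whose off-diagonal part is controlled precisely by Lemma~\ref{lem:compind} as $\sum_{z_1\neq z_2}|a_{z_1}||a_{z_2}|\cdot 2\epsilon(|z_1-z_2|r)$; this in turn is bounded by $4\sum_{n\geq 1}\epsilon(nr)$ by Lemma~\ref{lem:doublesum}. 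The net off-diagonal contribution is therefore of order $\|g\|^2\sum_n\epsilon(nr)$. Combining the diagonal and off-diagonal bounds and absorbing constants gives the claimed $10\|g\|^2(\exp(-c_3 d)+\sum_n\epsilon(nr))$.

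The hard part is the interaction between the infinite sum over $z$ and the two different error mechanisms. The concentration bound $\exp(-c_3 d)$ is uniform in $z$ but would diverge if summed unweighted over $z$, so retaining the $|a_z|^2$ weights and using $\sum|a_z|^2\leq 1$ is essential for the diagonal piece; on the off-diagonal side one must avoid producing a factor $\sum|a_z|$ (which is not controlled, only $\sum|a_z|^2\leq 1$ is), which is exactly why collecting the squared linear combination into a single $L^2$ norm of $\sum|a_z||\hat\varphi(\cdot-z\bw)|$ and invoking Lemma~\ref{lem:doublesum} is the right bookkeeping. Getting the right linear dependence on $\epsilon(nr)$ rather than $\epsilon(nr)^2$ and shepherding the constants into a clean factor of $10$ is a matter of care rather than cleverness once the splitting above is in place.
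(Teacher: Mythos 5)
Your treatment of the diagonal terms is correct, and in fact uses a different (and arguably cleaner) mechanism than the paper for producing the $\exp(-cd)$ factor: you bound $\Pr_{\bw}(\norm{\bxi-z\bw}\leq r)$ directly by the measure of a spherical cap, whereas the paper constructs a set of $\lceil\exp(cd)\rceil$ near-orthogonal directions whose associated unions of balls $A_{\bw}=\bigcup_{z\neq 0}B(z\bw,r)$ are pairwise disjoint, and averages over that set. Both mechanisms correctly handle the part of $g$ living near the lattice $\{z\bw\}_{z\neq 0}$.

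The off-diagonal part, however, has a genuine gap. You bound $\sum_{z_1\neq z_2}|a_{z_1}||a_{z_2}||h_{z_1}(\bw)||h_{z_2}(\bw)|$ pointwise in $\bw$ by writing it as $\bigl(\sum_{z}|a_z|\tilde h_z\bigr)^2$ minus its diagonal, where $\tilde h_z=\int|g||\hat\varphi(\cdot-z\bw)|$, and then applying Cauchy--Schwarz to the square. But after Cauchy--Schwarz, the diagonal of $\norm{g}^2\bigl\|\sum_z|a_z||\hat\varphi(\cdot-z\bw)|\bigr\|^2$ is $\norm{g}^2\sum_z|a_z|^2$, while the diagonal you subtract is only $\sum_z|a_z|^2\tilde h_z^2$ with $\tilde h_z\leq\norm{g}$; these do not cancel, and the leftover is of order $\norm{g}^2$, not $\norm{g}^2\sum_n\epsilon(nr)$. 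This is not a bookkeeping slip that more care would fix: the off-diagonal sum genuinely cannot be bounded pointwise in $\bw$. Take $g=\hat\varphi(\cdot-\bw)+\hat\varphi(\cdot-2\bw)$ and a $\psi$ with $a_1,a_2\neq 0$; then $h_1(\bw)$ and $h_2(\bw)$ are both close to $1$ and the off-diagonal sum is approximately $2\,\mathrm{Re}(\overline{a_1}a_2)$, a constant fraction of $\norm{g}^2$. So the randomness of $\bw$ must enter the off-diagonal estimate as well. The fix is the paper's: decompose $g=g\mathbf{1}_{A_\bw}+g\mathbf{1}_{A_\bw^C}$ with $A_\bw=\bigcup_{z\neq 0}B(z\bw,r)$ \emph{before} applying Cauchy--Schwarz to the whole sum $\sum_{z\neq 0}a_z\hat\varphi(\cdot-z\bw)$. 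The $g\mathbf{1}_{A_\bw}$ piece is handled by exactly the spherical-cap probability bound you already use for the diagonal (membership in $A_\bw$ forces the same angle condition regardless of $z$), and on $A_\bw^C$ the Gram matrix of $\{\mathbf{1}_{A_\bw^C}\hat\varphi(\cdot-z\bw)\}_{z\neq 0}$ has diagonal entries at most $\epsilon^2(r)$ rather than $\norm{\hat\varphi}^2=1$, since each $\hat\varphi(\cdot-z\bw)$ has only an $\epsilon^2(r)$ fraction of its mass outside $B(z\bw,r)$; its off-diagonal entries are then controlled by Lemmas \ref{lem:compind} and \ref{lem:doublesum} exactly as you intended.
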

	\begin{proof}
		By symmetry, given any function $f$ of $\bw$, the expectation $\E_\bw[f(\bw)]$ (where $\bw$ is uniform on a sphere) can be equivalently written as $\E_{\bw\in\Wcal}\E_U [f(U\bw)]=\E_U \E_{\bw\in\Wcal}[f(U\bw)]$, where $U$ is a rotation matrix chosen uniformly at random (so that for any $\bw$, $U\bw$ is uniformly distributed on the sphere of radius $\|\bw\|$), and $\E_{\bw\in\Wcal}$ refers to a uniform distribution of $\bw$ over some finite set $\Wcal$ of vectors of norm $2r$. In particular, we will choose $\Wcal = \{\bw_1,\ldots,\bw_{\lceil \exp(cd)\rceil}\}$ (where $c$ is some positive universal constant) which satisfies the following:
		\begin{equation}\label{eq:Wcal}
		\forall i~\norm{\bw_i} = 2r~~,~~ \forall i\neq j~|\inner{\bw_i,\bw_j}|<2r^2.
		\end{equation}
		The existence of such a set follows from standard concentration of measure arguments (i.e. if we pick that many vectors uniformly at random from $\left\{-\frac{2r}{\sqrt{d}},+\frac{2r}{\sqrt{d}}\right\}^d$, and $c$ is small enough, the vectors will satisfy the above with overwhelming probability, hence such a set must exist). 
		
		Thus, our goal is to bound $\E_U\E_{\bw\in\Wcal}\E\left[\left(\inner{g,\widehat{\psi_{\bw}\varphi}} - a_0\inner{g,\hat{\varphi}}\right)^2\right]$. In fact, we will prove the bound stated in the lemma for any $U$, and will focus on $U=I$ without loss of generality (the argument for other $U$ is exactly the same). First, by applying \lemref{lem:psivarph}, we have
		\begin{align}
		\E_{\bw\in\Wcal}\left[\left(\inner{g,\widehat{\psi_{\bw}\varphi}} - a_0\inner{g,\hat{\varphi}}\right)^2\right]&~=~\E_{\bw\in\Wcal}\left[\left(\inner{g,\sum_{z\in\mathbb{Z}}a_z\hat{\varphi}(\cdot-z\bw)}-a_0\inner{g,\hat\varphi}\right)^2\right]\notag\\
		&~=~ \E_{\bw\in\Wcal}\left[\inner{g,\sum_{z\in\mathbb{Z}\setminus\{0\}}a_z\hat{\varphi}(\cdot-z\bw)}^2\right].\label{eq:smallcorr0}
		\end{align}
		For any $\bw\in\Wcal$, let 
		\[
		A_{\bw} = \{\bx\in \reals^d~:~\exists z\in \mathbb{Z}\setminus\{0\}~s.t.~\norm{\bx-z\bw}< r\}.
		\]
		In words, each $A_{\bw}$ corresponds to the union of open balls of radius $r$ around $\pm \bw, \pm 2\bw,\pm 3\bw\ldots$. An important property of these sets is that they are disjoint: $A_{\bw}\cap A_{\bw'}=\emptyset$ for any distinct $\bw,\bw'\in \Wcal$. To see why, note that if there was some $\bx$ in both of them, it would imply  $\norm{\bx-z_1\bw}<r$ and $\norm{\bx-z_2\bw'}<r$ for some non-zero $z_1,z_2\in \mathbb{Z}$, hence $\norm{z_1\bw-z_2\bw'}<2r$ by the triangle inequality. Squaring both sides and performing some simple manipulations (using the facts that $\norm{\bw}=\norm{\bw'}=2r$ and $|z_1|,|z_2|\geq 1$), we would get
		\[
		2|z_1 z_2|\cdot\left|\inner{\bw,\bw'}\right| > 4r^2(z_1^2+z_2^2-1) \geq 2r^2(z_1^2+z_2^2)~~\Rightarrow~~
			\left|\inner{\bw,\bw'}\right| \geq r^2\left(\left|\frac{z_1}{z_2}\right|+\left|\frac{z_2}{z_1}\right|\right) \geq 2r^2,
		\]
		where we used the fact that $x+1/x \geq 2$ for all $x>0$. This contradicts the assumption on $\Wcal$ (see \eqref{eq:Wcal}), and establishes that $\{A_{\bw}\}_{\bw\in\Wcal}$ are indeed disjoint sets.
		
		We now continue by analyzing \eqref{eq:smallcorr0}. Letting $\mathbf{1}_{A_{\bw}}$ be the indicator function to the set $A_{\bw}$, and $\mathbf{1}_{A^C_{\bw}}$ be the indicator of its complement, and recalling that $(a+b)^2\leq 2(a^2+b^2)$, we can upper bound \eqref{eq:smallcorr0} by
		\begin{equation}\label{eq:smallcorr1}
		2\cdot\E_{\bw\in\Wcal}\left[\inner{g,\mathbf{1}_{A_{\bw}}\sum_{z\in\mathbb{Z}\setminus\{0\}}a_z\hat{\varphi}(\cdot-z\bw)}^2\right]+2\cdot\E_{\bw\in\Wcal}\left[\inner{g,\mathbf{1}_{A^C_{\bw}}\sum_{z\in\mathbb{Z}\setminus\{0\}}a_z\hat{\varphi}(\cdot-z\bw)}^2\right].
		\end{equation}
		We consider each expectation separately. Starting with the first one, we have
		\begin{align*}
		\E_{\bw\in\Wcal}&\left[\inner{g,\mathbf{1}_{A_{\bw}}\sum_{z\in\mathbb{Z}\setminus\{0\}}a_z\hat{\varphi}(\cdot-z\bw)}^2\right]~=~
		\E_{\bw\in\Wcal}\left[\inner{g,\mathbf{1}_{A_{\bw}}\left(\widehat{\psi_{\bw}\varphi}-a_0\hat\varphi\right)}^2\right]\\
		&=\E_{\bw\in\Wcal}\left[\inner{\mathbf{1}_{A_{\bw}}g,\widehat{\psi_{\bw}\varphi}-a_0\hat\varphi}^2\right]
		~\leq~
		\E_{\bw\in\Wcal}\left[\norm{\mathbf{1}_{A_{\bw}}g}^2\norm{\widehat{\psi_{\bw}\varphi}-a_0\hat\varphi}^2\right]\\
		&\leq 2\cdot\E_{\bw\in\Wcal}\left[\norm{\mathbf{1}_{A_{\bw}}g}^2\left(\norm{\widehat{\psi_{\bw}\varphi}}^2+\norm{a_0\hat\varphi}^2\right)\right]~=~
		2\cdot \E_{\bw\in\Wcal}\left[\norm{\mathbf{1}_{A_{\bw}}g}^2\left(\norm{\psi_{\bw}\varphi}^2+|a_0|^2\cdot\norm{\hat\varphi}^2\right)\right].
		\end{align*}
		Since we have $\norm{\hat\varphi}=\norm{\varphi}=1$, $|a_0|^2\leq \sum_z |a_z|^2 \leq 1$, and $\norm{\psi_{\bw}\varphi}^2 = \int \psi^2_{\bw}(\bx)\varphi^2(\bx)d\bx \leq \int \varphi^2(\bx)d\bx=1$, the above is at most
		\begin{align*}
		4\cdot\E_{\bw\in\Wcal}\left[\norm{\mathbf{1}_{A_{\bw}}g}^2\right]&~\leq~
		\frac{4}{|\Wcal|}\sum_{\bw\in\Wcal}\int 1_{A_{\bw}}(\bx)|g(\bx)|^2d\bx\\
		&~=~
		\frac{4}{|\Wcal|}\int \left(\sum_{\bw\in\Wcal}1_{A_{\bw}}(\bx)\right)|g(\bx)|^2d\bx.
		\end{align*}
		Since $A_{\bw}$ are disjoint sets, $\sum_{\bw\in\Wcal}\mathbf{1}_{A_{\bw}}(\bx)\leq 1$ for any $\bx$, so the above is at most
		\begin{equation}\label{eq:smallcorr2}
		\frac{4}{|\Wcal|}\int |g(\bx)|^2d\bx ~\leq~ 4\exp(-cd)\norm{g}^2. 
		\end{equation}
		
		We now turn to analyze the second expectation in \eqref{eq:smallcorr1}, namely $\E_{\bw\in\Wcal}\left[\inner{g,\mathbf{1}_{A^C_{\bw}}\sum_{z\in\mathbb{Z}\setminus\{0\}}a_z\hat{\varphi}(\cdot-z\bw)}^2\right]$. We will upper bound the expression deterministically for any $\bw$, so we may drop the expectation. Applying Cauchy-Schwartz, it is at most
		\begin{equation}
		\norm{g}^2\cdot \norm{\mathbf{1}_{A^C_{\bw}}\sum_{z\in\mathbb{Z}\setminus\{0\}}a_z\hat{\varphi}(\cdot-z\bw)}^2~=~ \norm{g}^2\left(\sum_{z_1,z_2\in \mathbb{Z}\setminus\{0\}}a_{z_1}\overline{a_{z_2}}\inner{\mathbf{1}_{A^C_{\bw}}\hat{\varphi}(\cdot-z_1\bw),\hat{\varphi}(\cdot-z_2\bw)}\right).\label{eq:smallcorr3}
		\end{equation}
		We now divide the terms in the sum above to two cases:
		\begin{itemize}
			\item If $z_1=z_2$, then
		\[
		\inner{\mathbf{1}_{A^C_{\bw}}\hat{\varphi}(\cdot-z_1\bw),\hat{\varphi}(\cdot-z_2\bw)} ~=~ \int \mathbf{1}_{A^C_{\bw}}(\bx)|\hat{\varphi}(\bx-z_1\bw)|^2d\bx.
		~=~ \int \mathbf{1}_{A^C_{\bw}}(\bx+z_1\bw)|\hat{\varphi}(\bx)|^2d\bx,
		\]
		and by definition of $A_{\bw}^C$ and the assumption $z_1\neq 0$, we have $\mathbf{1}_{A_{\bw}^C}(\bx+z_1\bw)=1$ only if $\norm{\bx}\geq r$. Therefore, as $\varphi$ is Fourier-concentrated, the above is at most
		\[
		\int_{\bx:\norm{\bx}\geq r}|\hat{\varphi}(\bx)|^2d\bx ~\leq~ \epsilon^2(r)\cdot \norm{\hat{\varphi}}^2 ~=~ \epsilon^2(r)\cdot \norm{\varphi}^2 ~=~\epsilon^2(r).
		\]
			\item If $z_1\neq z_2$, then by \lemref{lem:compind}, 
			\[
			\inner{\mathbf{1}_{A^C_{\bw}}\hat{\varphi}(\cdot-z_1\bw),\hat{\varphi}(\cdot-z_2\bw)} ~\leq~ \inner{|\hat{\varphi}(\cdot-z_1\bw)|,|\hat{\varphi}(\cdot-z_2\bw)|}
			~\leq~ 2\epsilon(|z_1-z_2|r). 
			\]
		\end{itemize}
		Plugging these two cases back into \eqref{eq:smallcorr3}, we get the upper bound
		\[
		\norm{g}^2\left(\sum_{z\in \mathbb{Z}\setminus\{0\}}|a_z|^2\epsilon^2(r)+2\sum_{z_1\neq z_2\in \mathbb{Z}}|a_{z_1}|\cdot|a_{z_2}| \cdot\epsilon(|z_1-z_2|r)\right).
		\]
		Noting that $\sum_z |a_z|^2\leq 1$, and applying \lemref{lem:doublesum}, the above is at most
		\[
		\norm{g}^2\left(\epsilon^2(r)+4\sum_{n=1}^{\infty}\epsilon(nr)\right)~\leq~
		5\norm{g}^2\sum_{n=1}^{\infty}\epsilon(nr),
		\]
		where we used the fact that $\epsilon^2(r)\leq \epsilon(r)\leq \sum_{n=1}^{\infty}\epsilon(nr)$. 
		Recalling this is an upper bound on the second expectation in \eqref{eq:smallcorr1}, and that the first expectation is upper bounded as in \eqref{eq:smallcorr2}, we get that \eqref{eq:smallcorr1} (and hence the expression in the lemma statement) is at most
		\[
		10\norm{g}^2\left(\exp(-cd)+\sum_{n=1}^{\infty}\epsilon(nr)\right)
		\]
		as required.
	\end{proof}

	With these lemmas in hand, we can now turn to prove the theorem. We have that
	\[
	\var_{\bw^\star}\left[\nabla F_{\bw^\star}(\bw)\right]~=~\E_{\bw^\star}\norm{\nabla F_{\bw^\star}(\bw)-\E_{\bw^\star}[\nabla F_{\bw^\star}(\bw)]}^2~\leq~ \E_{\bw^\star}\norm{\nabla F_{\bw^\star}(\bw)-\bp}^2
	\]
	for any vector $\bp$ which is not dependent of $\bw^\star$ (this $\bp$ will be determined later). Recalling the definition of the objective function $F$, and letting $\bg(\bx) = (g_1(\bx),g_2(\bx),\ldots) = \frac{\partial}{\partial \bw} f(\bw,\bx)$, the above equals
	\begin{align*}
	\E_{\bw^\star}&\norm{\E_{\bx\sim\varphi^2}\left[\left(f(\bw,\bx)-\psi(\inner{\bw^\star,\bx})\right)\bg(\bx)\right]-\bp}^2\\
	&=\sum_i\E_{\bw^\star}\left(\E_{\bx\sim\varphi^2}\left[f(\bw,\bx)g_i(\bx)-\psi(\inner{\bw^\star,\bx})g_i(\bx)\right]-p_i\right)^2\\\
	&=\sum_i\E_{\bw^\star}\left(\inner{\varphi g_i,\varphi f(\bw,\cdot)}-\inner{\varphi g_i,\varphi\psi_{\bw^\star}}-p_i\right)^2
	\end{align*}
	Let us now choose $\bp$ so that $p_i = \inner{\varphi g_i,\varphi f(\bw,\cdot)}-\inner{\varphi g_i,a_0 \varphi}$ (note that this choice is indeed independent of $\bw^\star$). Plugging back and applying \lemref{lem:smallcorr} (using the $L^2$ function $\widehat{\varphi g_i}$ for each $i$), we get
	\begin{align*}
	\sum_i\E_{\bw^\star}\left(\inner{\varphi g_i,\varphi\psi_{\bw^\star}}-\inner{\varphi g_i,a_0 \varphi}\right)^2
	&~=~
	\sum_i\E_{\bw^\star}\left(\inner{\widehat{\varphi g_i},\widehat{\varphi\psi_{\bw^\star}}}-\inner{\widehat{\varphi g_i},a_0 \widehat{\varphi}}\right)^2	\\
	&~\leq~ 10\sum_i\norm{\widehat{\varphi g_i}}^2\left(\exp(-cd)+\sum_{n=1}^{\infty}\epsilon(nr)\right),
	\end{align*}
	and since
	\[
	\sum_{i=1}^{d}\norm{\varphi g_i}^2 ~=~ \sum_{i=1}^{d}\int g_i^2(\bx)\varphi^2(\bx)d\bx ~=~ \int \norm{\bg(\bx)}^2\varphi^2(\bx)~=~
	\E_{\bx\sim\varphi^2}\norm{\bg(\bx)}^2 ~\leq~ G_{\bw}^2,
	\]
	the theorem follows.

	\subsection{Proof of \thmref{thm:trajectory}}
	We will assume w.l.o.g. that the algorithm is deterministic: If it is 
	randomized, we can simply prove the statement for any possible realization 
	of its random coin flips.
	
	We consider an oracle which given a point $\bw$, returns 
	$\E_{\bw^\star}[\nabla F_{\bw^\star}(\bw)]$ if $|\nabla 
	F_{\bw^\star}(\bw)-\E_{\bw^\star}[\nabla F_{\bw^\star}(\bw)]|\leq 
	\varepsilon$, and $\nabla 
	F_{\bw^\star}(\bw)$ otherwise. Thus, it is enough to show that with 
	probability at least $1-p$, the oracle will only return responses of the 
	form $\E_{\bw^\star}[\nabla F_{\bw^\star}(\bw)]$, which is clearly 
	independent of $\bw^\star$. Since the algorithm's output can depend on 
	$\bw^\star$ only through the oracle responses, this will prove the theorem.
	
	The algorithm's first point $\bw_1$ is fixed before receiving any 
	information from the oracle, and is therefore independent of $\bw^\star$. 
	By \thmref{thm:main}, we have that 
	$\var_{\bw^\star}(\nabla F_{\bw^\star}(\bw_1))\leq \varepsilon$, which by 
	Chebyshev's inequality, implies that 
	\[
	\Pr\left(|\nabla F_{\bw^\star}(\bw_1)-\E_{\bw^\star}[\nabla 
	F_{\bw^\star}(\bw_1)]|>\varepsilon\right)~\leq~ \varepsilon,
	\]
	where the probability is over the choice of $\bw^\star$. 
	Assuming the event above does not occur, the oracle returns 
	$\E_{\bw^\star}[\nabla F_{\bw^\star}(\bw)]$, which does not depend on the 
	actual choice of $\bw^\star$. This means that the next point $\bw_2$ chosen 
	by the algorithm is fixed independent of $\bw^\star$. Again by 
	\thmref{thm:main} and Chebyshev's 
	inequality,
	\[
	\Pr\left(|\nabla F_{\bw^\star}(\bw_2)-\E_{\bw^\star}[\nabla 
	F_{\bw^\star}(\bw_2)]|>\varepsilon\right)~\leq~ \varepsilon.
	\]
	Repeating this argument and applying a union bound, it follows that as long 
	as the number of iterations $T$ satisfies $T\varepsilon\leq p$ (or 
	equivalently $T\leq p/\varepsilon$), the 
	oracle reveals no information whatsoever on the choice of $\bw^\star$
	all point chosen by the algorithm (and hence also its output) are 
	independent of $\bw^\star$ as required.

%

	\subsubsection*{Acknowledgements}
	This research is supported in part by an FP7 Marie Curie CIG grant, Israel Science Foundation grant 425/13, and the Intel ICRI-CI Institute.

	\bibliographystyle{plain}
	\bibliography{mybib}
	
\end{document}